\documentclass[journal]{IEEEtran}
\usepackage{booktabs}
\usepackage{amsmath,amsfonts}
\usepackage{mathrsfs}
\usepackage{algorithm}
\usepackage{algorithmic}
\usepackage{array}
\usepackage[caption=false,font=scriptsize,labelfont=sf,textfont=sf]{subfig}
\usepackage{textcomp}
\usepackage{cite} 
\usepackage{stfloats}
\usepackage{url}
\usepackage{verbatim}
\usepackage{graphicx}
\usepackage{multirow}
\usepackage[normalem]{ulem}
\useunder{\uline}{\ul}{}
\usepackage{mathrsfs}
\usepackage{color}

\usepackage{mathabx}
\IEEEoverridecommandlockouts

\usepackage{amssymb}
\usepackage{amsthm}
\usepackage{indentfirst}
\usepackage{makecell}
\usepackage{float}
\usepackage{xcolor}
\usepackage[bookmarks=false,colorlinks,citecolor=black, filecolor=black, linkcolor=black,urlcolor=black]{hyperref}
\newtheorem{theorem}{Theorem}

\newtheorem{lemma}{Lemma}
\newtheorem{definition}{Definition}

% \newtheorem{proof}{Proof}[section]
% \addbibresource{reference.bib}

\usepackage{wasysym}
\usepackage{threeparttable}

\newcommand{\old}{\text{old}}

\hyphenation{op-tical net-works semi-conduc-tor IEEE-Xplore}
\def\BibTeX{{\rm B\kern-.05em{\sc i\kern-.025em b}\kern-.08em
    T\kern-.1667em\lower.7ex\hbox{E}\kern-.125emX}}
\usepackage{balance}

\begin{document}
% 强调TDA在MoE中的作用

\title{Topology-Assisted Spatio-Temporal Pattern Disentangling for Scalable MARL in Large-scale Autonomous Traffic Control} 

\author{
Rongpeng Li, Jianhang Zhu, Jiahao Huang, Zhifeng Zhao, and Honggang Zhang

\thanks{
    Rongpeng Li and Jianhang Zhu and Jiahao Huang are with the College of Information Science and Electronic Engineering, Zhejiang University, Hangzhou 310027, China (e-mail: \{lirongpeng; zhujh20; 22331083\}@zju.edu.cn).
   
   Zhifeng Zhao is with Zhejiang Lab, Hangzhou, China as well as the College of Information Science and Electronic Engineering, Zhejiang University, Hangzhou 310027, China (e-mail: zhaozf@zhejianglab.com).

   Honggang Zhang is with City University of Macau, Macau, China (email: hgzhang@cityu.edu.mo).
  }
}

\maketitle

\begin{abstract}
  %The rapid growth of vehicle ownership has exacerbated urban traffic congestion, leading to significant ecological and economic challenges. 
  Intelligent Transportation Systems (ITSs) have emerged as a promising solution towards ameliorating urban traffic congestion, with Traffic Signal Control (TSC) identified as a critical component. 
  Although Multi-Agent Reinforcement Learning (MARL) algorithms have shown potential in optimizing TSC through real-time decision-making, their scalability and effectiveness often suffer from large-scale and complex environments. 
  Typically, these limitations primarily stem from a fundamental mismatch between the exponential growth of the state space driven by the environmental heterogeneities and the limited modeling capacity of current solutions. 
  To address these issues, this paper introduces a novel MARL framework that integrates Dynamic Graph Neural Networks (DGNNs) and Topological Data Analysis (TDA), aiming to enhance the expressiveness of environmental representations and improve agent coordination.
  Furthermore, inspired by the Mixture of Experts (MoE) architecture in Large Language Models (LLMs), a topology-assisted spatial pattern disentangling (TSD)-enhanced MoE is proposed, which leverages topological signatures to decouple graph features for specialized processing, thus improving the model’s ability to characterize dynamic and heterogeneous local observations.
  The TSD module is also integrated into the policy and value networks of the Multi-agent Proximal Policy Optimization (MAPPO) algorithm, further improving decision-making efficiency and robustness.
  Extensive experiments conducted on real-world traffic scenarios, together with comprehensive theoretical analysis, validate the superior performance of the proposed framework, highlighting the model’s scalability and effectiveness in addressing the complexities of large-scale TSC tasks.

\end{abstract}

\begin{IEEEkeywords}
  Large-scale multi-agent reinforcement learning, dynamic graph neural network, pattern disentangling, mixture of experts, traffic light control.
\end{IEEEkeywords}
  
\section{Introduction}\label{sec1}

\IEEEPARstart{T}o effectively combat exacerbating traffic congestion issues \cite{8643543}, Intelligent Transportation Systems (ITS) emerge as a promising solution \cite{5430544, zhang2024irregular}, wherein Traffic Signal Control (TSC) stands out as a critical component to regulate the stop-and-go flow of vehicles \cite{8600382}.
However, traditional TSC algorithms, including pre-defined fixed-time strategies \cite{miller1963settings} and hand-crafted rules \cite{gartner2002arterial}, struggle to react in real time to highly dynamic traffic conditions due to their heuristic and over-simplistic assumptions. 
In contrast, recent advances in Artificial Intelligence (AI), particularly Multi-Agent Reinforcement Learning (MARL) \cite{9146378}, offer more flexible \cite{wei2018intellilight} and practical solutions \cite{varaiya2013max, wei2019presslight} by explicitly accounting for interdependencies among agents \cite{zhang2021multi}.
Meanwhile, although ongoing developments in Connected and Automated Vehicles (CAVs) have stimulated increasing interest in vehicle–infrastructure coordination for traffic management, the realization of a CAV-dominated environment is still expected to take decades \cite{10.1145/3447556.3447565}.
In light of these considerations, the optimization of conventional TSC systems remains a highly relevant and impactful research direction \cite{zhang2024survey}. 
%This study is therefore conducted under the aforementioned premise and presents our proposed approach accordingly.
% In particular, Multi-Agent Proximal Policy Optimization (MAPPO), which benefits from the update mechanism inherited from Proximal Policy Optimization (PPO) \cite{yu2022surprising}, has demonstrated exceptional performance in Multi-Agent Systems (MAS) and catalyzes research in TSC tasks \cite{10144471}.
% Traditional TSC algorithms, including pre-defined fixed-time strategies \cite{miller1963settings} and other hand-crafted rules \cite{gartner2002arterial}, rely heavily on heuristic scheduling and simplistic assumptions about traffic patterns. 
% These limitations render them inadequate for managing highly dynamic real-time traffic conditions.
% To circumvent the aforementioned drawbacks, recent advances in Artificial Intelligence (AI), like Reinforcement Learning (RL) \cite{9146378}, have driven a paradigm shift toward more flexible and intelligent approaches for TSC tasks \cite{wei2018intellilight}.
% Furthermore, methodologies based on multi-agent reinforcement learning (MARL), which explicitly account for interdependencies among agents \cite{zhang2021multi}, offer even more practical solutions \cite{varaiya2013max, wei2019presslight}.
% In particular, Multi-Agent Proximal Policy Optimization (MAPPO), which benefits from the update mechanism inherited from Proximal Policy Optimization (PPO) \cite{yu2022surprising}, has demonstrated exceptional performance in Multi-Agent Systems (MAS) and catalyzes research in TSC tasks \cite{10144471}.

%That being said, 
Despite recent advances, MARL methods in TSC also encounter significant performance deficiencies in partially observable large-scale scenarios, owing to the difficulty of tackling heterogeneities therein \cite{varaiya2013max, wei2019presslight, 10144471}. 
As illustrated in Fig. \ref{fig: Heter}, such heterogeneity arises from the irregularities in traffic intersection configurations, including the diverse junction types (e.g., cross intersections, T-junctions) and the varying lane numbers (e.g., two-lane roads versus four-lane roads). 
To circumvent these disparities, prior approaches typically adopt a uniform representation for individual agents (i.e., Traffic Light Controllers, or TLCs), such as aggregating coarse-grained metrics at the intersection level \cite{9405489} or collecting per-lane features with a mask matrix for dimension alignment across junctions \cite{10144471}.
Although these methods ensure basic input compatibility across agents, they inevitably obscure critical structural distinctions and introduce noise, thereby impairing the expressive fidelity of the environment.
Moreover, along with the exponentially increasing number of agents and exploding diversity of their traffic patterns, current strategies, which are designed with fixed or shallow architectures, lack sufficient representational capacity to distinguish fine-grained local variations. 
The drawback not only hinders generalization but also exacerbates the risk of overfitting \cite{pmlr-v235-obando-ceron24b}. 
In essence, the overall performance of MARL in the TSC task is fundamentally constrained by its capacity to accommodate the complexity of large-scale scenarios.
Hence, there emerges a strong incentive to develop a scalable MARL framework, competently adapted to a large number of agents with high-dimensional state space.
% Importantly, such plights are not exclusive to TSC but are also pervasive in other large-scale MARL applications.
% Given the limitations, there is an urgent need for some modifications on MARL models for large-scale and complex environments, thereby unlocking the potential of MARL in TSC and beyond.

%As illustrated in Fig. \ref{fig: Heter}, the irregularity of input matrices usually arises from variations in intersection configurations, such as the diverse intersection types (e.g., cross intersections, T-junctions) and the varying lane numbers (e.g., two-lane roads versus four-lane roads).
% Naturally, the dynamics of traffic flowing through each TLC can be modeled as a series of graphs, where vertices and edges correspond to intersections and lanes respectively \cite{10.1145/3357384.3357902}, and are subsequently processed by Dynamic Graph Neural Networks (DGNNs).
To address the irregularities, a widely adopted methodology is to model each intersection as a graph and process it using Graph Neural Networks (GNNs), where vertices and edges represent road segments and their connections, respectively.
Given the dynamic nature of traffic flow, Dynamic Graph Neural Networks (DGNNs), which are capable of refining temporal feature evolution, are particularly eligible as the backbone of our framework for environment awareness.
Although DGNN models, such as Temporal Graph Network (TGN) \cite{rossi2020temporal}, can effectively capture intricate interactions within a graph, the resulting embeddings often lack sufficient expressiveness for downstream decision-making.
The limitation is largely attributed to the neglect of intrinsic topological patterns across distributed intersections, which are crucial for TSC tasks \cite{green2020guide}. 
To mitigate the endemic in conventional GNN-based methods, TOGL \cite{horn2021topological} proposes incorporating topological signatures, derived from Topological Data Analysis (TDA) \cite{munch2017user}, as part of the input features to enhance the expressiveness and improve performance on graph classification tasks.
In parallel, as compact and stable representations of graphs \cite{MR1867354}, topological patterns may also serve as holistic anchors for decomposing complex graphs into more tractable substructures, thereby enabling fine-grained representation learning.
However, the over-reliance on global topological descriptors in TOGL overlooks critical local structural variations \cite{zhang2023expressive}, necessitating a re-design to improve the expressive capacity of DGNN models in large-scale TSC tasks.

\begin{figure}[t]
  \centering
  \includegraphics[width=0.48 \textwidth]{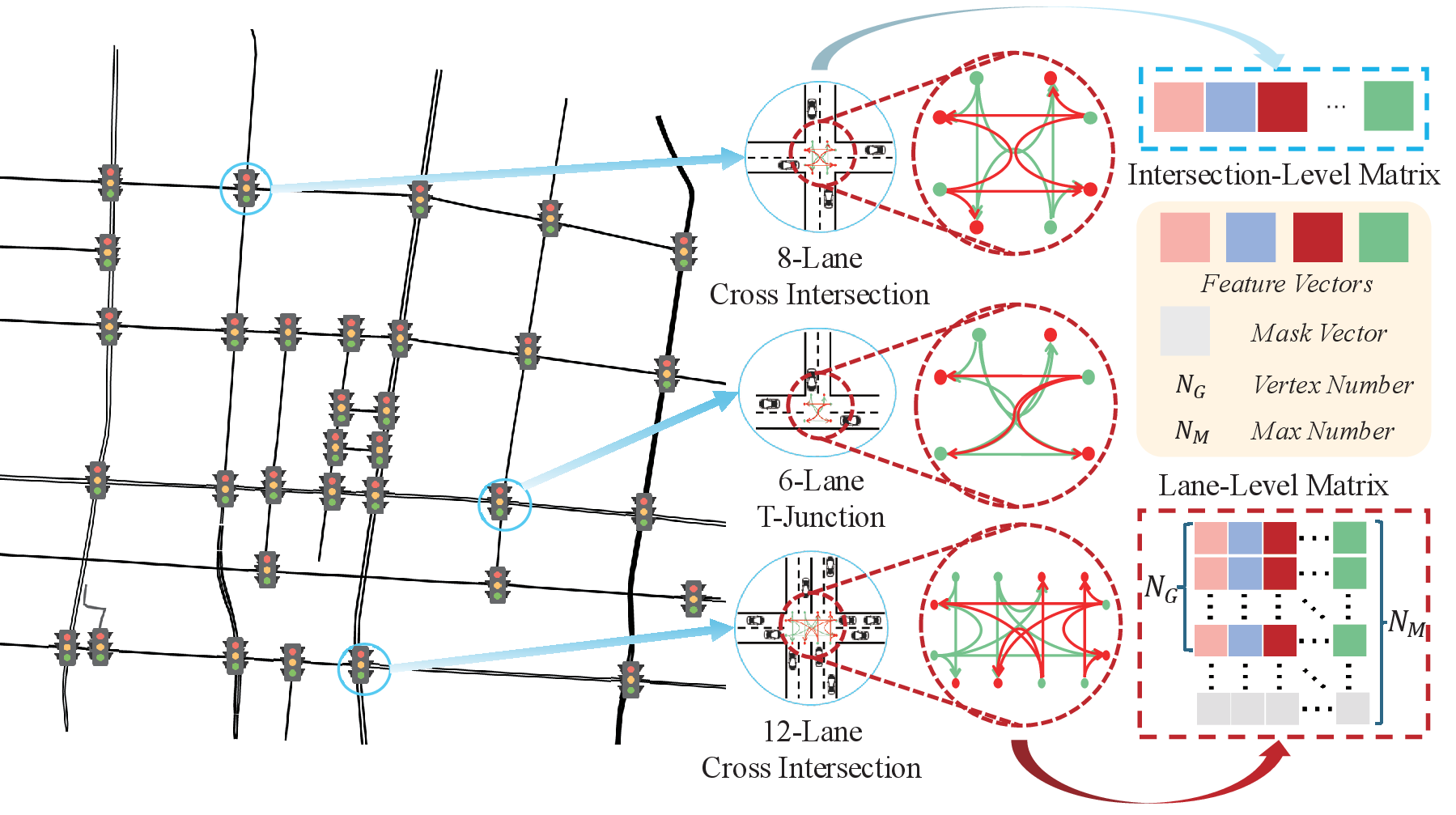}
  \caption{The illustration of intersection heterogeneities in the real-word map of Hangzhou, where the red and green lines represent possible traffic flow directions. Here, $N_G$ represents the number of vertex within each sub-graph, while $N_M$ is the maximum number of vertices across all intersections used for dimension alignment.}
  \label{fig: Heter}
\end{figure}

In this paper, we propose a Topology-assisted Spatial Disentangling (TSD)-enhanced Mixture of Experts (TMoE) architecture, tailored for scalable MARL in partially observable and large-scale environments such as TSC. 
Inspired by the demonstrated scalability of the MoE paradigm \cite{cai2025survey}, particularly the SoftMoE variant \cite{puigcerver2024from} in Large Language Models (LLMs), TMoE introduces a topology-aware routing mechanism coupled with specialized representation refinement to address the heterogeneous structural variations in traffic intersection configurations.
Specifically, by incorporating the TSD module, TMoE classifies and routes graph vertices to their pertinent experts based on local topological similarities.
This enables modular observation processing via a TGN-based backbone and facilitates more adaptive decision-making within the MARL framework.
Notably, TMoE supports information aggregation across topologically similar yet spatially distant vertices, thus alleviating the expressiveness bottleneck imposed by local smoothness in conventional GNNs \cite{rusch2022gradient}.
Through such topology-guided atomized learning, TGN with TMoE not only overcomes the limitation of TOGL to characterize local structural variations \cite{zhang2023expressive}, but also significantly enhances the model’s capacity to represent complex, large-scale state spaces. 
When applied to the decision-making module, each expert in TMoE can further specialize its policy by adapting to distinct topological patterns, promoting both robustness and interpretability.
Finally, our theoretical analysis demonstrates that TSD provides greater expressive power than TOGL, enabling more nuanced and structure-aware representations of heterogeneous traffic scenarios for downstream TSC tasks.
 In brief, the contributions of this paper are summarized as follows:
 \begin{itemize}
  \item Towards a scalable MARL solution for large-scale and partially observable TSC environments, we propose a TSD-enhanced MoE architecture (TMoE) to enable fine-grained, topology-aware representation of environmental observations. TGN-TMoE, a further integration of TMoE and TGN-based perception backbone, effectively mitigates the structural heterogeneity-induced expressivity bottleneck of conventional GNNs.
  % \item 
  % Unlike the conventional MoE architecture, our proposed TGN-TMoE allows vertices with similar local topological patterns rather than spatially adjacent vertices with are routed to shared experts for modular processing, effectively mitigating the expressivity bottleneck of conventional GNNs and enhancing MARL scalability in large-scale TSC tasks.
  \item We provide a theoretical framework to demonstrate the superior expressiveness of TGN-TMoE. Notably, our analysis proves that the model subsumes TOGL’s expressiveness \cite{horn2021topological}, and surpasses it in scenarios where local topological differences are critical.
  \item Extensive numerical experiments on traffic networks from three metropolitan cities validate the superiority of TGN-TMoE-empowered Multi-Agent Proximal Policy Optimization (MAPPO) over GLOSA \cite{suzuki2018new}, PressLight \cite{wei2019presslight}, and CoTV \cite{10144471}. Ablation studies further verify the necessity and effectiveness of the TSD module and its integration with TGN \cite{rossi2020temporal} for improved representation learning.
\end{itemize}

  The remainder of this paper is organized as follows. The related works are reviewed in Section \ref{sec2}. Then, we sketch system models and formulate the problem in Section \ref{sec3}. We elaborate on the details of the proposed TGN-TMoE module for observation fusion and decision-making for TSC in Section \ref{sec4}. In Section \ref{sec5}, we present the experimental results and discussions. Finally, we draw the conclusion and future directions in Section \ref{sec6}.
  
  For convenience, we also list the mainly used notations of this paper in Table \ref{tab:notations}.

  \begin{table}[t]
    \caption{Summary of Major Notations.}
    \centering
    \label{tab:notations}
    \begin{tabular}{c|m{5.5cm}}
      \toprule
      \textbf{Notation} & \textbf{Definition} \\ \midrule
      $\mathcal{N}$ & Number of agents (traffic light controllers) \\ \hline
      $\mathcal{S}$ & Global state space of the environment \\ \hline
      $\mathcal{A}$ & Joint action space of all agents \\ \hline
      $\mathcal{P}$ & State transition function \\ \hline
      $\mathcal{R}$ & Global reward function \\ \hline
      $\mathcal{O}$ & Observation space of agents \\ \hline
      $V_{\phi}$ & Value function parameterized by $\phi$ \\ \hline
      $\mathbf{s}_i^{(t)}$ & Global state of agent $i$ at time step $t$ \\ \hline
      $\mathbf{o}_i^{(t)}$ & Local state of agent $i$ at time step $t$ \\ \hline
      $\pi_{\theta}$ & Policy function parameterized by $\theta$ \\ \hline
      $a_i^{(t)}$ & Action of agent $i$ at time step $t$ \\ \hline
      $\mathcal{N}_j$ & Neighbor set of vertex $j$ \\ \hline
      $\mathcal{V}_i, \mathcal{E}_i$ & Vertex set and edge set of sub-graph $\mathcal{G}_i$ \\ \hline
      $\mathcal{G}_i := \langle\mathcal{V}_i, \mathcal{E}_i\rangle$ & Sub-graph of agent $i$ \\ \hline
      $\mathcal{G}^+_i := \langle\mathcal{V}^+_i, \mathcal{E}^+_i\rangle$ & Augmented subgraph of agent $i$ including MF vertex \\ \hline
      $A_{ij}$ & Element of adjacency matrix $\mathbf{A}$\\ \hline
      $\mathbf{e}_{jk}$ & Edge feature between vertices $j$ and $k$ \\ \hline
      $\mathbf{Mem}_j$ & Long-term memory embedding for vertex $j$ \\ \hline
      $\mathbf{v}_j, \tilde{\mathbf{v}}_j$ & Actual and predicted feature vector of vertex $j$\\ \hline
      $\mathbf{v}_\text{MF}, \mathbf{v}'_\text{MF}$ & Global and localized MF embedding \\ \hline
      $\hat{\mathbf{v}}^{(h)}_j$ & Embedding of vertex $j$ from the $h$-th GAT head \\ \hline
      $\mathbf{T}_v, \mathbf{T}_e$ & Topological signatures for vertices and edges \\ \hline
      $\mathbf{Q}, \widetilde{\mathbf{Q}}$ & Routing weight matrices in SoftMoE \\ \hline
      $\widehat{\mathbf{X}}^{mh}_p$ & Slot generated by the $p$-th router of TMoE \\ \hline
      $\mathbf{Y}_p$ & Output of the $p$-th expert in TMoE \\ \hline
      $P$ & Number of experts in MoE architecture \\ \hline
      $H$ & Number of GAT heads \\ \hline
      $T_\text{tp}$ & Waiting threshold for vehicle teleportation \\ \hline
      ${\mathbf{V}}_i,\widetilde{\mathbf{V}}_i$ & Actual and predicted feature matrix of agent $i$ \\ \hline
      $\bar{\mathbf{v}}_\text{MF}, \tilde{\mathbf{v}}_\text{MF}$ & Temporally and spatially aligned MF embeddings \\ \hline
      $\widehat{\mathbf{V}}^{(h)}_i$ & Embedding matrix of agent $i$ from the $h$-th GAT head \\ \hline
      $x(l,t)$ & Number of vehicles queued on lane $l$ at time step $t$ \\ \hline
      $T_w(l,t)$ & Cumulative waiting time for vehicles on lane $l$ at time step $t$ \\ \hline
      $\mathcal{D}^{(0)}, \mathcal{D}^{(1)}$ & The 0-dimensional and 1-dimensional persistence diagrams \\ \hline
      $d, d_0, d_1$ & Dimension number of trained vertex features, raw features and the TDA features \\ 
      \bottomrule
    \end{tabular}
\end{table}

\section{Related Work}\label{sec2}
\subsection{Control for Traffic Light Signals}
Intelligent urban traffic management has garnered increasing attention in recent years, aiming to alleviate congestion and optimize traffic flow.
Early attempts predominantly focus on heuristic and hand-crafted rules.
For instance, the Green Light Optimal Speed Advisory (GLOSA) system \cite{suzuki2018new} recommends optimal speeds to vehicles, enabling them to traverse intersections during green lights within a specified time window.
Similarly, the Sydney Coordinated Adaptive Traffic System (SCATS) \cite{sims1980sydney} dynamically selects appropriate signal timing plans from a predefined set in response to real-time traffic conditions.
While these methods have proven effective in enhancing traffic stability and efficiency in practice, they are fundamentally constrained by their reliance on deterministic models.
Such rigidity limits their ability to cope with the highly dynamic and stochastic nature of modern urban traffic, underscoring the need for more adaptive and robust control strategies.

Reinforcement Learning (RL), known for its ability to optimize strategies in dynamic and uncertain environments, has become a powerful tool to address these challenges \cite{9146378}. 
Early efforts, such as Ref. \cite{ault2019learning}, apply single-agent RL policies like Deep Q-Learning (DQN) \cite{mnih2013playing} to individual intersections. 
While effective for isolated intersections, these approaches overlook the intricate interdependencies across the network.
To overcome this limitation, MARL introduces coordination into decision-making processes, providing a more holistic solution for TSC tasks.
For example, PressLight \cite{wei2019presslight}, based on the MaxPressure principle \cite{kouvelas2014maximum}, utilizes DQN to optimize traffic efficiency using real-time data. 
MPLight \cite{chen2020toward} further demonstrates MARL’s scalability on networks with over a thousand signals, showcasing its potential in handling large-scale systems.
Beyond TSC alone, recent research has explored integrated approaches that jointly manage CAVs and TLCs \cite{10.1145/3580305.3599839}.
In particular, CoTV employs MAPPO to coordinate vehicle speeds and traffic signals, so as to enhance overall system performance \cite{10144471}. 
However, as large-scale CAV-infrastructure systems remain decades away from full deployment \cite{10.1145/3447556.3447565}, infrastructure-centered TSC remains a practical and commercially viable research focus.

\subsection{Collaboration for Multi-Agent Systems with GNN}
 Despite the progress of MARL in TSC, most approaches fall short in fully leveraging inter-agent cooperation.
 Recent studies address this by integrating explicit collaboration mechanisms, with GNNs providing a robust framework for learning in MARL \cite{wang2022meta}.
 Systems like CoLight \cite{10.1145/3357384.3357902} and IG-RL \cite{9405489} demonstrate the advantages of incorporating Graph Attention Network (GAT) \cite{velivckovic2017graph} into feature-processing pipelines. 
 Building upon this, Ref. \cite{wang2024large} proposes a multi-layer GNN that separately models inter- and intra-junction interactions, incorporating geometric and spatial features for improved performance.
 These advancements highlight the potential of exploring other GNN variants.
 For example, TGN \cite{rossi2020temporal} or other DGNN models \cite{skarding2021foundations}, which further leverage temporal information, are particularly suited to capturing the rapid evolution of traffic patterns, thereby enabling more adaptive decision-making in TSC \cite{wang2022meta}.
 Similarly, incorporating explicit topological signatures derived from TDA tools into GNN has demonstrated further improvements in representation learning \cite{horn2021topological}.
 This approach holds promise for generating more expressive states for TSC tasks.
 However, these methods often rely on fixed or shallow network architectures, which may lack the capacity to generalize across large and complex state spaces, leading to overfitting issues. 
 Effectively addressing structural heterogeneity in dynamic graphs remains a central challenge for developing scalable and robust TSC solutions in large-scale environments.

\subsection{Scalability for Large-Scale MARL with MoE}
 Another complication arises when scaling the simulation, as the dimensionality of the state space increases, making the training process progressively more difficult. 
 Relying solely on parameter-sharing optimization strategies to train a single global model may be suboptimal in more heterogeneous and large-scale environments \cite{christianos2021scaling}.
 This observation aligns with broader research trends, such as the rise of LLMs, where MoE architectures emerge as an effective solution to tackle the heterogeneity in large-scale data \cite{cai2025survey}. 
 Technically, MoE can dynamically route input data to specialized sub-networks, or ``experts,” allowing each expert to handle specific feature types and thus enabling more calibrated and efficient learning.
 In fact, early proposals typically follow a dense design, where activate all expert networks to process every input token \cite{jacobs1991adaptive}, improving accuracy at the cost of high computational overhead.
 Subsequent innovations have emphasized the routing network’s design, leading to sparse architectures that select only the top-$K$ experts for each training instance \cite{cai2025survey}.
 Various approaches, ranging from fixed rules \cite{roller2021hash} to linear programming \cite{lewis2021base} and even with RL \cite{bengio2015conditional}, have been explored. 
 To further ensure fair training opportunities for all experts, researchers have introduced random noise injection \cite{shazeer2017outrageously}, auxiliary loss functions \cite{fedus2022switch}, and even reversing the paradigm by letting experts choose tokens rather than the other way around \cite{zhou2022mixture}.
 SoftMoE \cite{puigcerver2024from} further advances this concept by discarding the traditional token notion, converting all tokens into a set of condensed slots.
 Each expert specializes in computing on a subset of these slots, thus ensuring balanced, comprehensive training participation.
 However, these studies are generally in a Euclidean space, and the application of MoE in non-Euclidean data like GNN is still in its infancy.

This atomized learning approach has expanded beyond natural language processing into multi-modal data analysis \cite{chen2024llava} and federated learning \cite{reisser2021federated}.
Moreover, these architectures demonstrate potential in MARL, particularly in multi-task training \cite{willi2024mixture}. 
The effectiveness of SoftMoE \cite{puigcerver2024from} has been validated for the decision-making processes in algorithms like MADQN and MAPPO \cite{pmlr-v235-obando-ceron24b}.
Despite its efficacy, refining and evaluating MoE-based MARL architectures in large-scale traffic environments with significant heterogeneity and complexity poses ongoing challenges.

\section{Preliminary and System Model}\label{sec3}

\subsection{System Models}\label{sec:system_model}

\subsubsection{TLC Agent} \label{sec:tlc_agent}
 To accomplish the TSC task, we formulate the problem as a Decentralized Partially Observable Markov Decision Process (Dec-POMDP).
 This formulation provides a structured framework to handle the partial observability and decentralized decision-making inherent in multi-agent traffic signal control. 
 Formally, it is defined as an 8-tuple $\langle \mathcal{N}, \mathcal{S}, \mathcal{A}, \mathcal{P}, \mathcal{R}, \mathcal{O}, \Omega, \gamma \rangle$, where $|\mathcal{N}|$ is the number of agents (traffic light controllers) in the set $\mathcal{N}$, $\mathcal{S}$ is the global state space of the environment, $\mathcal{A}$ represents the joint action space of all agents, $\mathcal{P}$ is the state transition function, $\mathcal{R}$ is the global reward function, $\mathcal{O}$ is the observation space, $\Omega$ denotes the observation function, and $\gamma$ is the discount factor.
 Within such a framework, each TLC at intersection $i$ makes decisions (e.g., switching traffic light status) based on its local observations (i.e., the current traffic light status and surrounding traffic flow conditions) $\mathbf{o}_i^{(t)}$, which are derived from the global state $\mathbf{s}^{(t)}$ using the observation function $\Omega(\mathbf{o}^{(t)}_i \mid \mathbf{s}^{(t)}) \in \mathcal{O}$.
 Subsequently, the agent $i$ decides whether to switch its current phase following the policy $\pi_i(a_i^{(t)} \mid \mathbf{o}^{(t)}_i)$, with ${\bf a}^{(t)} := \{a_1^{(t)},\cdots, a_{|\mathcal{N}|}^{(t)} \}$. Correspondingly, the environment evolves according to the function $\mathcal{P}(\mathbf{s}^{(t+1)}|\mathbf{s}^{(t)}, {\bf a}^{(t)})$.
 To encourage cooperative behavior, they share a global reward function $\mathcal{R}(\mathbf{s}^{(t+1)}, \mathbf{a}^{(t)})$, and the system’s objective is to maximize the expected cumulative reward $\mathbb{E}\left[\sum_{t=0}^{\infty} \gamma^{t} \mathcal{R}(\mathbf{s}^{(t)}, \bf{a}^{(t)})\right]$ over all possible trajectories, where $\gamma$ denotes a discounted constant. 
 Full implementation details are provided in Section \ref{4B}.

 \subsubsection{Graph Construction} 
 Recalling the limitations posed by the homogeneous assumption for agents, we resort to employing graph-based learning to refine the observations. 
 Specifically, given the time-varying impact of traffic signals on the flow, we regard the observations of agent $i$ as a sequence of intersection-related sub-graph ``snapshots" from the very beginning $t_0$ to $t$, i.e, $\mathbf{o}^{(t)}_i$ is derived from $\{\mathcal{G}_i(t_0), \cdots, \mathcal{G}_i(t)\}$.
 Notably, $\mathcal{G}_i(t) := \langle \mathcal{V}_i(t), \mathcal{E}_i(t)\rangle$ denotes the sub-graph at $t$ for agent $i$, where each vertex in such a sub-graph corresponds to a specific lane at the current intersection, while the directional traffic flow between lanes defines the edges among vertices. 
 In other words, each vertex feature $\mathbf{v}^{i}_j(t)$ encapsulates the traffic flow information of each lane, and $\mathcal{V}_i(t) := \{\mathbf{v}^i_j(t) \mid \forall j \in \mathcal{N}_i\}$ collects the raw vertex features from a set $\mathcal{N}_i$ of neighboring lanes. 
 These features can be organized into a matrix representation $\mathbf{V}_i(t) \in \mathbb{R}^{|\mathcal{N}_i| \times d_0}$ by stacking all the vectors, where $d_0$ is the initial feature dimension of each lane.
 Meanwhile, $\mathcal{E}_i(t) := \{\mathbf{e}^i_{jk}(t)\mid A^i_{jk}(t)=1, \forall j,k \in \mathcal{N}_i\}$ denotes the edge features, which reflect the influence of traffic signals on the flow between the connected lanes, and $A^i_{jk}(t)$ is an element in agent $i$'s adjacent matrix $\mathbf{A}_i$\footnote{For simplicity, unless explicitly emphasized, we omit the superscript $i$ and the timestamp $t$ of $\mathbf{v}^i_j(t)$, $\mathbf{e}^i_{jk}(t)$, and $A^i_{jk}$ hereafter with the understanding that all terms refer to agent $i$ at time $t$.}.
 Specifically, $A^i_{jk} = 1$ indicates the existence of edge $(j, k)$.
 Unlike most existing methods \cite{10.1145/3357384.3357902}, the methodology that vertices represent lanes and edges denote traffic flow directions allows the dynamic graph to comprehensively encode both the traffic states and the control dynamics within the intersection, facilitating modular processing of vertices with similar local topological patterns.

 Based on the aforementioned construction, a sequence of sub-graphs\footnote{To differentiate it from the graph constructed by the conventional method, we use the term \textit{sub-graph} to refer to $\mathcal{G}_i$ hereafter.} can be generated for each agent $i$. 
 To enable more flexible inter-agent coordination, we can expand the receptive field of each sub-graph by incorporating additional lanes that are directly influenced by neighboring traffic signals, thereby avoiding reliance on explicit agent-level communication.
 To further facilitate the interaction between local sub-graphs and global traffic patterns, we construct an augmented sub-graph $\mathcal{G}_i^+$, inspired by the MF mechanism introduced in \cite{hao2023gat}.
 Mathematically, we add a virtual vertex $\text{MF}$ and a set of edges as
 \begin{subequations}
 \label{eq:add_v}
     \begin{align}
         &\mathcal{N}_i^+ = \mathcal{N}_i \cup \text{MF},\\
         &\mathcal{V}_i^+ = \mathcal{V}_i \cup \mathbf{v}_\text{MF},\\
         &\mathcal{E}_i^+ = \mathcal{E}_i \cup  \{ \mathbf{e}_{jM}\mid \forall j \in \mathcal{N}_{i} \},
         % \mathcal{G}_i
     \end{align}
 \end{subequations}
 where each virtual edge $\mathbf{e}_{jM}$ connects the MF vertex $\mathbf{v}_{\text{MF}}$ with the peripheral vertices in $\mathcal{N}_i$, encoding abstracted global-context information. The computation of $\mathbf{v}_\text{MF}$ is detailed in Section \ref{sec:top_softmoe}.
 
 \subsection{Preliminaries}
 \subsubsection{MAPPO}\label{MAPPO}
 As a widely adopted MARL approach, MAPPO \cite{10144471} learns decentralized policies for multiple agents under a centralized training framework, using shared or individualized observations.
 Notably, MAPPO facilitates the training of policies $\pi_{\theta_i}(\cdot \mid \mathbf{o}^{(t)}_i)$ and value functions $V_{\phi}(\mathbf{s}^{(t)})$ for all agents by periodically updating the target policy and value functions using samples generated by an earlier version of the policy $\pi_{\theta_{\old, i}}$ and an old evaluation function $V_{\phi_{\old}}(\mathbf{s}^{(t)})$. 
 In other words, 
 \begin{subequations}
  \begin{align}
  \label{eq:MAPPO_OBJ}
  J^{(t)}_{\pi_i}(\theta_i) &= \min \big( r(\theta_i) \hat{A}, \text{clip}\left(r(\theta_i), 1 - \epsilon, 1 + \epsilon\right) \hat{A} \big),\\
  J^{(t)}_{V}(\phi) &= \big(V_{\phi}(\mathbf{s}^{(t)}) - (\hat{A}+V_{\phi_{\old}}(\mathbf{s}^{(t)}))\big)^2,
  \end{align}
 \end{subequations}
 where a clipping function $\text{clip}(\cdot)$ is adopted to stabilize the policy update. 
 $\epsilon$ is the clip parameter, $ r(\theta_i) := \frac{\pi_{\theta_i}(\cdot \mid \mathbf{o}^{(t)}_i)}{\pi_{\theta_{\old, i}}(\cdot \mid \mathbf{o}^{(t)}_i)} $ denotes the probability ratio, measuring the likelihood of taking certain actions under the new policy versus the old one. 
 The term $\hat{A} := \sum_{l=0}^{\infty} (\gamma \lambda)^l \delta^{(t+l)}$ is the generalized advantage estimation (GAE) \cite{schulman2015high}, with the temporal-difference (TD) error $\delta^{(t)} := \mathcal{R}^{(t)} + \gamma V_{\phi_{\old}}(\mathbf{s}^{(t+1)}) - V_{\phi_{\old}}(\mathbf{s}^{(t)})$ and $\gamma$ being the discount factor. 
 To further encourage exploration, an entropy term is usually employed, which can be expressed as
\begin{equation}
  H(\pi_{\theta_i}) := -\sum\limits_{a \in \mathcal{A}} \pi_{\theta_i}(a \mid \mathbf{o}^{(t)}_i) \log \pi_{\theta_i}(a \mid \mathbf{o}^{(t)}_i).
\end{equation}

In summary, the final optimization objective of MAPPO can be formulated as
 \begin{equation}
  J_{\text{MAPPO}} = \mathbb{E}_{i}\big[ J^{(t)}_{\pi_i}(\theta_i) + J^{(t)}_{V}(\phi) + \iota H(\pi_{\theta_i})\big], \label{eq:opt_obj_final}
\end{equation}
where $\iota $ is the weight of the entropy term. 
To improve training efficiency for MARL, all agents share the relevant parameters, implying that $\pi_i = \pi_{\theta}, \forall i \in \mathcal{N}$.
 Consequently, the objective of policy training can be expressed as 
 \begin{equation}
  \pi_{\theta} = \arg \max_{\pi} \min_{\phi} J_{\text{MAPPO}}. %\mathbb{E}_{s,a,o}\left[ \sum\nolimits_{t=0}^{\infty} \gamma^{t} \mathcal{R}^{(t)} \mid \pi_{\theta}(\cdot \mid \mathbf{o}^{(t)}) \right].
 \end{equation}
 The training procedure for this fundamental framework is summarized in Algorithm \ref{alg:MAPPO_training}.

 \begin{algorithm}[tbp]
  \caption{The Training of MAPPO for TLC Agent.}
  \label{alg:MAPPO_training}
  \begin{algorithmic}[1]
  \REQUIRE \ \\
  Obtain the set of traffic light agents to control;\\
  Initialize the policy network $\Theta$ and value function $\Phi$ for all traffic light controllers through parameter sharing; \\ 
  Initialize the training episode number and other hyperparameters of MAPPO;
  \FOR {iteration $= \{1,\cdots, I\}$}
  \FOR {each train episode} 
  \FOR {t $ = \{1, \cdots, T\}$}
  \FOR {each TLC agent $i$}
  \STATE {Observe its own processed local state $\mathbf{o}^{(t)}_i$ from current state $\mathbf{s}^{(t)}$ and generate an action $a_i^{(t)}$;}
  \STATE{Calculate the value $V_i^{(t)}$ for current state $\mathbf{s}^{(t)}$;}
  \STATE {Execute action $a_i^{(t)}$, and observe the next state $\mathbf{s}^{(t+1)}_i$;}
  \ENDFOR
  \ENDFOR
  \STATE {Compute the advantage estimates $\hat{A}^{(t)}$ for each time step $t$;}
  \ENDFOR
  \STATE {Update $\Theta$ and $\Phi$ for each agent according to Eq. \eqref{eq:opt_obj_final} via the gradient descent and Adam optimizer.}
  \ENDFOR
  \end{algorithmic}
 \end{algorithm}
 
 \subsubsection{TDA}\label{TDA}
 Notably, we resort to TDA for an enhancement to dynamic graph-based analysis.
 Under the umbrella of TDA, we can strike at a rigorous study of ``shape" from a combination of algebraic topology and other pure mathematical methodologies.
 Simplicial complexes are the central concept in algebraic topology, while a graph $\mathcal{G}$ can be seen as a low-dimensional simplicial complex that only contains $0$-simplices (vertices) and $1$-simplices (edges) \cite{horn2021topological}. 
 In other words, we only consider the $0$-dimensional vertex-level and $1$-dimensional edge-level topological features.
 
 To extend the expressivity of the topological features, \textit{persistent homology} employs a series of \textit{filtrations} to imbue a simplicial complex with scale information, where filtrations imply performing a step-by-step examination of topological spaces based on some thresholds.
 More specifically, for a sub-graph $\mathcal{G}_i^+$ whose vertices with multi-dimension features, we construct a family of $U$ vertex \textit{filtration} functions $f_u: \mathbb{R}^{d_0} \to \mathbb{R}$ for $u = 1, \ldots, U$, where $d_0$ denotes the dimension of features, and correspondingly predefine a series of threshold $\text{th}_u^{(1)} < \cdots < \text{th}_u^{(N)}$, where we assign $N$ as the number of vertices within the graph. Notably, the operator $\Phi := [f_1, \dots, f_U]$ denotes the combination of all $U$ filtration functions and is responsible 
 for mapping the features into a compact vector space $\mathbb{R}^U$ for vertices and edges within the sub-graph.
 Through the filtration, we can obtain a family of filtrated sub-graphs $ \emptyset \subseteq \mathcal{G}_u^{(1)} \subseteq \cdots \subseteq \mathcal{G}_u^{(N)} = \mathcal{G}$, where each element $\mathcal{G}_u^{(n)} := (\mathcal{V}_u^{(n)}, \mathcal{E}_u^{(n)})$ is defined with $ \mathcal{V}_u^{(n)} := \{ j \in \mathcal{N}_i^+ \mid f_u(\mathbf{v}_j) \leq \text{th}_u^{(n)} \}$ and $\mathcal{E}_u^{(n)} := \{ \mathbf{e}^i_{jk} \mid \max \{ f_u(\mathbf{v}_j), f_u(\mathbf{v}_k) \} \leq \text{th}_u^{(n)} \}$, $\forall n \in \{1,\cdots, N\}$.
 
 After completing the filtration processes for all $U$ views of the sub-graph, we can calculate the lifespan of a topological pattern (vertex or edge), denoted by a persistence $ \rho:=[b_\rho, d_\rho)$, where $b_\rho$ and $d_\rho$ are the birth and death time, respectively. 
 Corresponding to such a process, $\Upsilon$ denotes a pertinent, pre-defined \textit{filter} operator. 
 These intervals are visualized in a Persistence Diagram (PD), $\mathcal{D} := \{(b_\rho, d_\rho) \in \mathbb{R}^2 \mid b_\rho < d_\rho \} = \{\mathcal{D}_1^{(0)},\cdots, \mathcal{D}_U^{(0)},\mathcal{D}_1^{(1)}, \cdots, \mathcal{D}_U^{(1)}\}$, as the topological summary of the filtration \cite{edelsbrunner2002topological}. 
 $\mathcal{D}_u^{(l)},\forall l\in\{0,1\}$ represents the PD result for the $u$-th filtration at the $l$-th simplex (i.e., vertices and edges). 
 However, to integrate PDs into machine learning models, it remains essential to perform some suitable \textit{embedding} operations $\Psi$, which not only preserve the topological information but also ensure compatibility with graph learning \cite{carriere2020perslay, JMLR:v20:18-358}. 
 Appendix A highlights common methods to achieve $\Psi$. 
 In line with the outlined methodologies, we employ a combination of \textit{rational hat function} \cite{JMLR:v20:18-358}, \textit{triangle point transformation}, \textit{Gaussian point transformation} and \textit{line point transformation} \cite{carriere2020perslay} to transform the PDs into topological signatures and then generate vertex-level and edge-level topological representations as
 \begin{equation}
 \label{eq:tda}
 \mathbf{T}_{v}, \mathbf{T}_e = \text{MLP}\circ\Psi\circ\Upsilon\circ\Phi(\mathcal{G}),
 \end{equation}
 where $\text{MLP}$ denotes a Multi-Layer Perceptron (MLP) operator with learnable parameters.
 Each element $\mathbf{T}_v(j) \in \mathbb{R}^{d_1}$ can be interpreted as the concatenation of the aforementioned four embedding results for the vertex $j$.

 Subsequently, these TDA-derived topological features can be seamlessly integrated into AI models, (e.g., GNNs), enriching the input data with topological insights.
 As demonstrated in Ref. \cite{horn2021topological}, the incorporation of topological features has been shown to significantly enhance the expressivity of GNNs from both theoretical and experimental perspectives. 
 This improvement primarily lies in its ability to perceive global topological features, such as PDs. 

\subsubsection{TGN}\label{TGN}
Ref. \cite{rossi2020temporal} introduces a canonical DGNN framework TGN, denoted as $\text{TGN}(\cdot): \mathcal{G}_i^+{(t)} \rightarrow \widetilde{\mathbf{V}}_i{(t)} \in \mathbb{R}^{|\mathcal{N}_i^+|\times d}$, to efficiently predict the evolving features $\widetilde{\mathbf{V}}_i{(t)}$ for vertices in dynamic graphs $\mathcal{G}_i^+{(t)}$. 
 Typically, TGN comprises two primary components, i.e., a temporal learning module (including $\text{Agg}$ and $\text{GRUCell}$) and a structural learning module $\text{GAT}$.
 For the temporal patterns, it can be refined with
 \begin{subequations}
 \label{gru}
 \begin{align}
 % \label{eq:mem}
  &\mathbf{{Mem}}_j(t) = \text{Agg}\big([\mathbf{{v}}_j(t-T), \cdots, \mathbf{{v}}_j(t)]\big),\\
  &\tilde{\mathbf{v}}_j(t+1) = \text{GRUCell}\big(\tilde{\mathbf{v}}_j(t),\mathbf{{Mem}}_j(t)\big),
 \end{align}
 \end{subequations}
 where $[\mathbf{{v}}_j(t-T), \cdots, \mathbf{{v}}_j(t)]$ represents the initial features of each vertex $j \in \mathcal{N}^{+}_i$ within the time window of size $T$ preceding the current timestamp $t$.
 $\text{Agg}(\cdot)$ is implemented as an MLP-based aggregator, where a single-step time window (i.e., $T=0$) combined with a dense layer suffices to capture short-term dynamics, as evidenced in our prior study \cite{zhu2024semantics}.
 Subsequently, the GRU module \cite{chung2014empirical} represented by $\text{GRUCell}(\cdot)$ is employed to predict the long-term pattern $\tilde{\mathbf{v}}_j(t+1)$ with its previously predicted state $\tilde{\mathbf{v}}_j(t)$ and the newly derived short-term pattern $\mathbf{{Mem}}_j(t)$. 

 To achieve a holistic representation of the spatio-temporal patterns, GNNs, particularly the Multi-Head GAT (MH-GAT) models \cite{brody2021attentive}, are eligible for subsequent spatial learning.
 By means of MH-GAT, the embedding of a vertex $j$ is iteratively updated by aggregating the features of its one-hop neighbors $\mathcal{N}^{+}_i$ as well as the associated edges in the pertinent sub-graph $\mathcal{G}^{+}_i$ by
 \begin{subequations}
 \label{eq:GAT}
 \begin{align}
 \label{eq:GAT_0}
  &\hat{\mathbf{v}}^{(h)}_j = \text{GAT}^{(h)}\big(\tilde{\mathbf{v}}_{j}(t+1), {\mathcal{G}_i^+}\big),\\%\sum_{k \in \mathcal{N}^j}\alpha_{jk}\mathbf{\Theta}_{t}\mathbf{x}_{k}, \\
  & \tilde{\mathbf{v}}_j(t+1) \leftarrow \sigma\big(\mathbf{W}_\text{mh}[\hat{\mathbf{v}}^{(0)}_j || \cdots || \hat{\mathbf{v}}^{(h)}_j]\big), \label{eq:GAT_1}
  \end{align}
 \end{subequations}
 where $\hat{\mathbf{v}}^{(h)}_j$ denotes the hidden state of vertex $j$ generated by the $h$-th head of an MH-GAT layer $\text{GAT}^{(h)}(\cdot)$. 
 Subsequently, the final output $\tilde{\mathbf{v}}_j(t+1)$ in Eq. \eqref{eq:GAT_1} is computed via an MLP with learnable parameters $\mathbf{W}_\text{mh}$, the concatenation operator $||$ and the activation function $\sigma$.
Notably, by stacking multiple GNN layers, the final embedding for each vertex incorporates information from a larger receptive field, resulting in a more expressive representation. 
Correspondingly, the vertex embedding matrix of sub-graph $i$ (i.e., agent $i$) is defined as $ \widetilde{\mathbf{V}}_i{(t)} := \begin{bmatrix} \tilde{\mathbf{v}}_j(t+1) \end{bmatrix}_{\forall j \in \mathcal{N}_i^+} \in \mathbb{R}^{|\mathcal{N}_i^+| \times d}$.

  The training objective of TGN can be formulated with a Mean Squared Error (MSE) loss,
\begin{equation}
  J_{\text{TGN}} = \frac{1}{|\mathcal{N}_i^+|} \sum\limits_{k \in \mathcal{N}_i^+} \|\tilde{\mathbf{v}}_k(t+1) - \mathbf{v}_k{(t+1)}\|^2,
  \label{eq:loss_1}
\end{equation}
 where $\tilde{\mathbf{v}}_k(t+1)$ represents the predicted features for vertex $k$ as in Eq. \eqref{eq:GAT_1}, while
 $\mathbf{v}_k{(t+1)}$ denotes the ground-truth embedding at $t+1$.
 
\subsection{Problem Formulation}\label{PF}

 We aim to enable TLC agents to determine optimal actions through a MARL algorithm. 
 Considering the inherent structural features and dynamic patterns of each agent, we employ a TGN model, described in Section \ref{TGN}, to process the raw observations of all agents.
 The graph-level embedding for agent $i$, denoted as $\mathbf{o}^{(t)}_{i}$, is then extracted via a set of readout layers. 
 Formally, it can be expressed as
 \begin{subequations}
  \begin{align}
    &\widetilde{\mathbf{V}}_i{(t)} \leftarrow \text{TGN}\big(\mathcal{G}_i^+(t)\big),\label{eq:4a}\\
    &\mathbf{o}^{(t)}_{i} = \text{readout}\big( \widetilde{\mathbf{V}}_i(t)\big). \label{eq:4b}
  \end{align}
 \end{subequations}
 The readout module contributes to counteract the non-Euclidean nature of graph data and produces a uniform graph-level representation. 
 While the specific form of $\text{readout}(\cdot)$ in Eq. \eqref{eq:4b} is flexible, we adopt a first-order statistic, i.e., \textit{Max Pooling}, as a balance between simplicity and effectiveness.
 Prior to pooling, the input features are projected into a higher-dimensional space via an MLP layer to enhance the discriminative capacity.
 Afterward, the processed compact observations $\mathbf{o}_i \in \mathbb{R}^{d_o}$, which encapsulate partial spatio-temporal patterns of the entire junction, act as the input to the MAPPO decision module. 
 
 To jointly optimize the policy performance and the accuracy of graph representation, we define a unified objective function,
 \begin{equation}
 \label{eq:tot_obj}
  J_\text{tot} = - J_{\text{MAPPO}} + \lambda_g \cdot J_{\text{TGN}},
 \end{equation}
 where $\lambda_g$ is a trade-off coefficient controlling the contribution of the TGN loss.
 This joint optimization approach leverages the differentiability of the entire model, enabling simultaneous optimization of both policy and TGN parameters via unified backpropagation. Since the representation directly informs the decision-making, simply aggregating global features while neglecting local structure differences could undermine the benefits of MAPPO in determining policy performance \cite{zhang2023expressive}. Therefore, it remains meaningful to resort to an alternative means to effectively characterize environmental dynamics, so as to maximize Eq. \eqref{eq:tot_obj}.
 % Given the TGN's role as a latent predictor of environmental dynamics, aligning its output with actual future states might contribute effectively to downstream decision-making.
 % Thus, we cast the problem as minimizing the predictive error between the TGN output and the true future state, as formalized by the MSE loss in Eq. \eqref{eq:loss_1}.
%  Nonetheless, to compromise computational overhead, models often adopt parameter sharing across agents during training. 
%  However, this approach poses significant challenges. 
%  Specifically, it limits the model’s capacity to extract critical features in highly heterogeneous environments and increases the risk of overfitting specific policies in the decision-making process.
%  Therefore, we resort to leveraging the topological features from TDA, in combination with MoE and MF, to provide a scalable solution.

\begin{figure*}[!h]
  \centering
  \includegraphics[width=0.92 \textwidth]{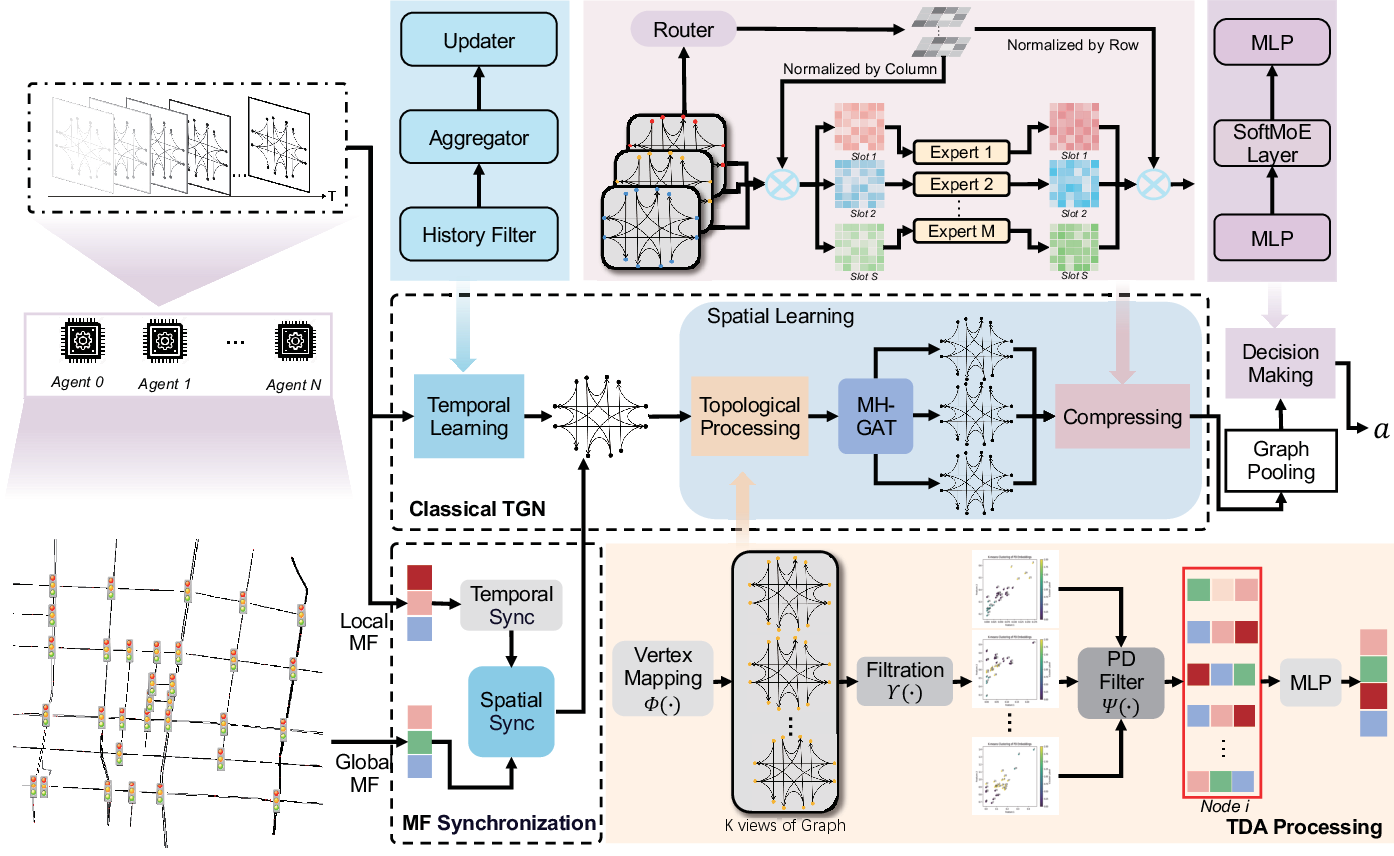}
  \caption{The pipeline of our topology-enhanced spatial disentangling model for state processing and decision-making.}
  \label{tts}
\end{figure*}
\section{Spatial-Temporal Pattern Disentangling for Observation Fusion and Decision-Making}\label{sec4}

%\subsection{Topology Enhanced SoftMoE for Graph Disentangling}\label{sec:top_softmoe}

 To yield synergistic improvement in both spatio-temporal graph modeling and decision quality, we introduce a TMoE architecture by refining the intricate dynamic graph through routing-based multiple spatially specialized experts. 
 Alongside the extraction of structural pattern with $\text{TGN}(\cdot)$, the TMoE architecture effectively orchestrates a set of specialized experts for spatial representation learning and decision making. The overall pipeline for TGN-TMoE is illustrated in Fig. \ref{tts}.
 
 Similar to SoftMoE \cite{puigcerver2024from}, TMoE also involves \textit{routing}, \textit{projecting}, and \textit{aggregation} procedures, termed as $\text{Route}(\cdot), \text{Proj}(\cdot)$ and $\text{Merge}(\cdot)$, respectively.
 Given input token matrix $\mathbf{X}$ and $P$ projection experts, the overall computation can be unified as
\begin{equation}
\label{eq:softmoe_in_1}
    \mathbf{Y} = \text{Merge}\bigg( \Big[ \text{Proj}\big( \text{Route}(\mathbf{X})_p\big)_p \Big]_{p=1}^P\bigg).
\end{equation}
 While a straightforward implementation treats each vertex $\tilde{\mathbf{v}}_j$ in the subgraph as an input token, i.e., $\mathbf{X} = \widetilde{\mathbf{V}}_i$, this strategy disregards the original topological relations among vertices and risks undermining the structural integrity of the graph.
 Moreover, due to the inherent smoothing effect in GNNs \cite{rusch2022gradient}, adjacent vertices often yield highly similar embeddings, which biases the routing mechanism toward assigning uniform scores to neighboring vertices.
 Such informational redundancy is then exacerbated through expert projections, increasing the risk of overfitting and reducing model robustness.
 These issues highlight the need for more principled routing anchors that preserve topological context and facilitate structural learning.

\subsection{Topology Enhanced Graph Representation}\label{sec:top_softmoe}
 Inspired by TOGL \cite{horn2021topological}, we introduce explicit topological priors and global coordination signals into the TGN-based learning pipeline.
 Specifically, between the sub-equations in Eq. \eqref{gru}, we introduce an additional MLP layer, $\text{MLP}_0(\cdot)$, to fuse topological features and raw features, including $\mathbf{Mem}_j$ and $\tilde{\mathbf{v}}_j$, as follows,
 \begin{equation}
 \label{eq:topo+}
  \begin{aligned}
    &\tilde{\mathbf{v}}_j \leftarrow \text{MLP}_0(\mathbf{T}_{v}(j), \tilde{\mathbf{v}}_j),\\
    &\mathbf{Mem}_j \leftarrow \text{MLP}_0(\mathbf{T}_{v}(j), \mathbf{Mem}_j), \forall j \in \mathcal{N}^+_i,\\
  \end{aligned}
\end{equation} 
 where $\mathbf{T}_{v}(j)$ is the TDA-based topological embedding generated according to Eq. \eqref{eq:tda}.
 
 Notably, to further enhance agent coordination in large-scale environments, we draw upon MF theory, %transforming pairwise agent interactions into 
 by adding interactions between each agent and a virtual global MF representation.
 Concretely, based on the latest available information from $t-1$, a virtual vertex, computed as $\mathbf{v}_\text{MF}(t-1) := \frac{1}{|\mathcal{N}|}\sum_{i\in\mathcal{N}}\mathbf{v}_i(t-1)$, is introduced to represent the global MF and incorporated into each sub-graph throughout the GNN's training process, as demonstrated in Eq. \eqref{eq:add_v}.
 However, such a global MF signal may become outdated and insufficiently reflect the agent-specific relevance of surrounding states.
 Therefore, we introduce a temporal and spatial synchronization mechanism, defined as 
 \begin{subequations}
   \label{eq:mf}
  \begin{align}
    & \bar{\mathbf{v}}_\text{MF}(t) = \text{MLP}_1\big(\mathbf{v}_\text{MF}(t-1)\big),\\
    & \tilde{\mathbf{v}}_\text{MF}(t) = \text{GRUCell}({\bar{\mathbf{v}}}_\text{MF}(t), \mathbf{v}'_\text{MF}),
  \end{align}
 \end{subequations}
 where the $\text{MLP}_1(\cdot)$ ensures the temporal adaptation of the global MF signal, and the local MF embedding, defined as $\mathbf{v}'_\text{MF}:= \frac{1}{|\mathcal{N}_i|}\sum_{j\in\mathcal{N}_i}\mathbf{v}_j(t)$, provides localized context for GRU-based refinement, ensuring that the global MF remains sensitive to local interactions.
 
 Finally, the enhanced subgraph $\mathcal{G}_i^+$, now enriched with both topological priors and MF elements, is processed via $\text{TGN}(\cdot)$ to extract informative temporal patterns as described in Eq. \eqref{gru}.
 Owing to the injected topological priors in Eq. \eqref{eq:topo+}, the intermediate representations from Eq. \eqref{eq:GAT_0} not only encode the raw attributes but also exhibit topology-aware characteristics, which makes them more suitable as structure-preserving tokens for MoE-based routing.
 
 \subsection{Implementations of TGN-TMoE}
 Recalling the drawbacks of regarding vertex as the input of SoftMoE, we adopt the TMoE architecture by treating the outputs of MH-GAT in Eq. \eqref{eq:GAT_0} as the ``token" while operating ``slot" calculation on the specific vertices.
 In this design, the outputs of $\text{Route}(\cdot)$ in Eq. \eqref{eq:softmoe_in_1}, denoted as $\widehat{\mathbf{X}}^\text{mh}_p$, comprise both the topological embeddings $\mathbf{T}_v$ and the original vertex features $\mathbf{V}_i$, enabling the generation of topology-sensitive routing weights for each vertex.
 %For each slot $\widehat{\mathbf{X}}^\text{mh}_p$, its generation procedure consists of two steps.
 Specifically, we first score the importance of individual vertices to an expert $p$, which is designed to specialize in a certain type of topological pattern, with
 \begin{equation}
     \mathbf{Q}_{jp} = \frac{\exp{\big(\mathbf{W}_p \cdot\mathbf{T}_v(j)\big)}}{\sum\limits_{j \in \mathcal{N}^+_i}\exp{\big(\mathbf{W}_p \cdot\mathbf{T}_v(j)\big)}} \cdot \mathbf{1}_{d\times H},\label{eq:moe_tt_mixing_0}
 \end{equation}
where $\mathbf{W}_p \in \mathbb{R}^{1 \times d_1}$ is the expert-oriented weight matrix to be trained while $\mathbf{1}_{d\times H}$ represents the all-ones matrix with the dimension of $d \times H$ used to broadcast the score.
Notably, $H$ denotes the number of heads.
Theoretically, the value of $\mathbf{Q}_{jp} \in \mathbb{R}^{d \times H}$ can be interpreted as quantifying the topological distance between vertex $j$ and the anchor topology $\mathbf{W}_p$.

Subsequently, these weights are then used to construct the expert-specific input slot $\widehat{\mathbf{X}}^{\text{mh}}_p$ with
 \begin{equation}
  \label{eq:moe_tt_mixing_1}
  \widehat{\mathbf{X}}^\text{mh}_p = \mathbf{Q}_p \odot \begin{bmatrix}
    \widehat{\mathbf{V}}_i^{(1)} ,
    \widehat{\mathbf{V}}_i^{(2)} ,
    \cdots ,
    \widehat{\mathbf{V}}_i^{(H)} 
    \end{bmatrix}^\intercal \in \mathbb{R}^{{|\mathcal{N}_i^+| \times d \times H}},
 \end{equation}
where $\mathbf{Q}_p$ scatters $\mathbf{Q}_{jp} \in \mathbb{R}^{d \times H}$ to align the index $j$ and $\odot$ denotes the element-wise product operator.
The matrix $\widehat{\mathbf{V}}_i^{(h)} := \begin{bmatrix} \hat{\mathbf{v}}_j^{(h)} \end{bmatrix}_{\forall j \in \mathcal{N}_i^+} \in \mathbb{R}^{|\mathcal{N}_i^+| \times d}, \forall h \in [1,2,\dots, H]$ denotes the output of the $h$-th GAT head in Eq. \eqref{eq:GAT_0} that stacks the embeddings of all vertices in $\mathcal{G}^+_i$.
As illustrated in Fig. \ref{Fig:TSD}, this routing mechanism selectively emphasizes vertices with similar topological characteristics across all GAT heads within each slot, effectively disentangling subgraphs along structural dimensions.
This topology-aware perspective makes the integration of SoftMoE into the spatial learning pipeline both theoretically grounded and practically effective.

\begin{figure}[!t]
  \centering
  \includegraphics[width=0.45 \textwidth]{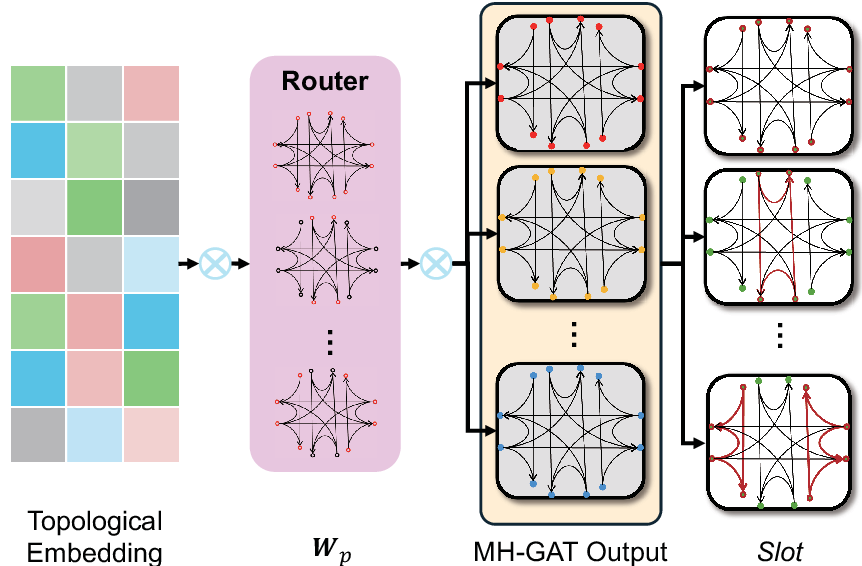}
  \caption{The illustration of the TSD module.}
  \label{Fig:TSD}
\end{figure}

 Practically, a series of MH-GAT-oriented experts $\text{Proj}(\cdot)$, functionally aligned with Eq. \eqref{eq:GAT_1}, are employed to process the disentangled representations obtained from the routing stage. 
 In detail, each expert produces
 \begin{equation}
     \mathbf{{Y}}_p = \text{MLP}_2(\widehat{\mathbf{X}}^\text{mh}_p) \in \mathbb{R}^{{|\mathcal{N}_i^+| \times d}},\label{eq:moe_tt_mixing_2_1}
 \end{equation}
 where $\mathbf{Y}_p$ denotes the output of expert $p$ applied to its corresponding slot.
 To integrate the outputs from all $P$ experts, a learnable aggregation mechanism is further adopted. 
 A combined weight matrix $\widetilde{\mathbf{Q}}$ serves as the routing coefficient for the $\text{Merge}(\cdot)$ operation in Eq. \eqref{eq:softmoe_in_1}, which can be formulated as 
\begin{subequations}
  \label{eq:moe_tt_mixing_2}
  \begin{align}
    & \widetilde{\mathbf{Q}}_{p} = \frac{\exp{\big(\sum\limits_{j \in \mathcal{N}^+_i} \mathbf{W}_{sp} \cdot \mathbf{T}_v(j) \big)_{p}}}{\sum\limits_{p = 1}^{P}\exp{\big(\sum \limits_{j \in \mathcal{N}^+_i} \mathbf{W}_{sp} \cdot \mathbf{T}_v (j)\big)_{p}}}\cdot\mathbf{1}_{|\mathcal{N}_i^+| \times d}, \label{eq:moe_tt_mixing_2_2}\\
    & \widetilde{\mathbf{V}}_i = \sum\limits^P_{p=1}\widetilde{\mathbf{Q}}_p \odot \mathbf{Y}_p,\label{eq:moe_tt_mixing_2_3}
  \end{align}
 \end{subequations} 
 where $\mathbf{W}_{sp} \in \mathbb{R}^{1 \times d_1}$ is a trainable parameter matrix governing the $p$-th expert fusion strategy.
 Finally, the aforementioned TGN-TMoE replaces the conventional implementation of Eq. \eqref{eq:GAT_1} and Eq. \eqref{eq:4a}, and the aggregated output $\widetilde{\mathbf{V}}_i$ is passed through the readout operator in Eq. \eqref{eq:4b} to yield the more fine-grained observation $\mathbf{o}^{(t)}_i$ for downstream decision-making module.

 Finally, we summarize the overall TGN-TMoE procedure in Algorithm \ref{alg:TTS}. 
 
\begin{algorithm}[t]
  \caption{Training Process of TGN-TMoE Model.}
  \label{alg:TTS}
  \begin{algorithmic}[1]
  \REQUIRE Sub-graph $\mathcal{G}^+_i$ for each junction and the mean-field embedding of the entire map $\mathcal{G}$.
  \ENSURE The representations of the whole graph-based junction.\
  \STATE{Obtain the MF results for each sub-graph with Eq. \eqref{eq:mf};}
  \STATE {Insert the customized global map embedding as a virtual vertex into $\mathcal{G}^+_i$ to obtain the extended graph $\mathcal{G}^+_i$;}
  \STATE{Obtain the PD $\mathcal{D}_i$ for $\mathcal{G}^{+}_i$;}
  \STATE{Calculate the topological signatures from the global PD $\mathcal{D}_i$ for individual vertex and edge with Eq. \eqref{eq:tda};}
  \STATE {Update each vertex feature $\mathbf{v}_j$ using memory $\mathbf{Mem}_{j}, \forall j \in \mathcal{G}^{+}_i$ via Eq. \eqref{gru};}
  \FOR{each sub-graph $\mathcal{G}^{+}_i$}
      \FOR{each layer of the GAT module}
        \STATE {Iterate the embeddings of current graph with MH-GAT module;}
        \STATE{Regard the multi-head outputs as the tokens for the MoE architecture and calculate the topology-oriented slot with Eqs. \eqref{eq:moe_tt_mixing_0} and \eqref{eq:moe_tt_mixing_1};}
        \STATE{Route slots to experts for delicate processing with Eq. \eqref{eq:moe_tt_mixing_2_1};}
        \STATE{Merge the disentangled graph-based results into compact representations via Eqs. \eqref{eq:moe_tt_mixing_2_2} and \eqref{eq:moe_tt_mixing_2_3};}
      \ENDFOR
  \ENDFOR
  \STATE{Readout the vertex-level embeddings to obtain a unified graph-level representation for decision-making.}
  % \ENDFOR
  \end{algorithmic}
 \end{algorithm}
 
\subsection{Theoretical Analysis of TGN-TMoE}
To theoretically validate the effectiveness of the proposed TMoE-enhanced TGN architecture (TGN-TMoE), we demonstrate its superiority over the State-Of-The-Art (SOTA) TOGL \cite{horn2021topological} in terms of the Weisfeiler-Lehman (WL) test \cite{weisfeiler1968reduction}, a widely used criterion for evaluating the discriminative capacity of graph models. 
The 1-WL test iteratively refines vertex labels by aggregating the labels of each vertex and its immediate neighbors using an injective hash function. 
This process continues until convergence or a predefined maximum number of iterations is reached.
Typically, two graphs are considered 1-WL equivalent if their final vertex labels are indistinguishable.
% However, 1-WL equivalence cannot distinguish all non-isomorphic graphs, as there still exist certain non-isomorphic graphs that are 1-WL-equivalent.
% To overcome this limitation, the more expressive TOGL model \cite{horn2021topological} is proposed, which incorporates global topological features to improve discriminative capacity.
Building on this insight, our proposed TGN-TMoE architecture further enhances expressiveness by introducing topologically guided expert specialization.
As shown in the following theorem, TGN-TMoE exceeds the expressiveness of TOGL, thereby inheriting and advancing the theoretical guarantees associated with 1-WL test.
% Notably, we have the following theorem.

\begin{theorem}[Superior Expressiveness of TGN-TMoE]\label{thm:superExp}
The improved model exhibits superior expressivity power compared to TOGL in distinguishing graphs that satisfy the following conditions:
\begin{enumerate}
    \item The graphs have identical global topological features (e.g., 0-dimensional and 1-dimensional PD are equal).
    \item The graphs differ in their local topological structures (e.g., cycles or connectivity patterns).
    \item The graphs are distinguishable by the 1-WL test \cite{weisfeiler1968reduction}.
\end{enumerate}
\end{theorem}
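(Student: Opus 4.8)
The plan is to establish the claim in two directions. First I would show that TGN-TMoE is at least as expressive as TOGL, so that it inherits TOGL's guarantee of being able to distinguish any pair of graphs separable by the 1-WL test. The key observation is that the TSD routing in Eq.~\eqref{eq:moe_tt_mixing_0}--\eqref{eq:moe_tt_mixing_2} is built on top of the same TDA-derived features $\mathbf{T}_v,\mathbf{T}_e$ from Eq.~\eqref{eq:tda} that TOGL ingests. One then argues that the mixture-of-experts map can realize the identity routing (a single dominant expert, or uniform weights with a shared projection) as a special case, so TOGL's computation graph embeds as a particular parameter setting of TGN-TMoE; hence anything TOGL distinguishes, TGN-TMoE distinguishes. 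Combined with the fact that the TGN backbone with injective GAT aggregation is at least as powerful as 1-WL (mirroring the GIN-style argument used for TOGL), this yields the ``inherits 1-WL'' half.

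Second, and this is the crux, I would exhibit (or argue existence of) a pair of graphs $\mathcal{G}, \mathcal{G}'$ satisfying conditions (1)--(3) on which TOGL fails but TGN-TMoE succeeds. The construction should pin down two graphs whose global $0$- and $1$-dimensional persistence diagrams $\mathcal{D}^{(0)},\mathcal{D}^{(1)}$ coincide for every filtration in the chosen family $\Phi$ (so TOGL's added topological readout contributes nothing discriminative beyond the underlying GNN), yet whose \emph{local} topological signatures $\mathbf{T}_v(j)$ differ on some vertex neighborhoods (e.g.\ one graph has a short cycle incident to a vertex where the other has a tree-like patch). A natural candidate is a pair built from the classic 1-WL-distinguishable non-isomorphic graphs with matching degree sequences, decorated so that the multiset of birth--death pairs is globally identical — for instance two disjoint triangles versus a single $6$-cycle attached appropriately, or a CSL-type construction. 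Then the argument is: since $\mathbf{W}_p\cdot\mathbf{T}_v(j)$ varies across vertices in $\mathcal{G}$ differently than in $\mathcal{G}'$, the per-expert routing weights $\mathbf{Q}_{jp}$ differ, the disentangled slots $\widehat{\mathbf{X}}^{\text{mh}}_p$ differ, and (after choosing the expert MLPs and merge weights $\widetilde{\mathbf{Q}}_p$ to avoid cancellation) the pooled representation $\mathbf{o}^{(t)}_i$ differs — whereas TOGL, relying only on the global diagram plus a GNN that is (by hypothesis, at this depth/width) WL-limited in the relevant way, collapses the two. One must also verify condition (3) is actually used: it guarantees the non-topological part of the network alone cannot be blamed, so the separation genuinely comes from the local-topology-aware routing.

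The main obstacle I anticipate is the simultaneous satisfaction of conditions (1) and (2): producing a concrete, verifiable graph pair whose global persistence diagrams are provably equal under the specific filtration family used (vertex-degree or feature-based filtrations $f_u$) while the local signatures provably differ. This requires either a clean combinatorial lemma (e.g.\ ``diagrams depend only on the filtration-value multiset and the connectivity pattern, which we equalize'') or an explicit small example checked by hand; handling arbitrary learnable filtrations $\Phi$ honestly may force the statement to be phrased as ``there exist filtrations / parameter settings such that \dots'' rather than ``for all.'' A secondary technical point is ruling out pathological cancellation in the $\text{Merge}(\cdot)$ step — i.e.\ showing the existence of expert and merge parameters for which the differing slots are not washed out — which is a standard ``generic parameters suffice'' or injectivity-of-MLP argument but needs to be stated carefully. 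I would also make explicit the depth/receptive-field assumption implicit in ``distinguishable by 1-WL,'' since TGN-TMoE's advantage over TOGL is about \emph{how} topology is injected (locally, per-vertex, with specialization) rather than about unbounded GNN depth.
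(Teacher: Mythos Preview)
Your proposal is sound but takes a genuinely different route from the paper. The paper does not argue subsumption (that TGN-TMoE can realize TOGL as a special parameter setting) and does not construct an explicit graph pair witnessing strict separation. Instead, its core technical content is two injectivity lemmas: it shows that the routing map $f_1$ of Eq.~\eqref{eq:moe_tt_mixing_0} and the merge map $f_3$ of Eq.~\eqref{eq:moe_tt_mixing_2_2} are each injective in $\mathbf{T}_v$, under full-rank conditions on the randomly initialized weight matrices $\mathbf{W}_p,\mathbf{W}_{sp}$ (this is precisely your ``generic parameters suffice'' point, made concrete via a contradiction argument on the softmax). From injectivity of the composite $F(\widehat{\mathbf{V}},\mathbf{T}_v)=f_3(\mathbf{T}_v)f_2(f_1(\mathbf{T}_v)^\intercal\widehat{\mathbf{V}})$, the paper concludes abstractly that whenever two graphs have different vertex-level signatures $\mathbf{T}_v$, the TMoE output differs; the ``at least 1-WL'' half is inherited simply by citing TOGL's guarantee and noting the added module is injective.

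What each approach buys: your route would produce a concrete witness pair and a clean embedding of TOGL into TGN-TMoE, making the strict separation tangible, at the cost of the combinatorial construction you correctly flag as the hard part. The paper's route avoids any construction and reduces everything to linear-algebraic rank conditions, but it leans on two points it does not fully justify --- that injectivity of $f_1$ and $f_3$ separately implies injectivity of the full composite through $f_2$, and that condition~(2) of the theorem actually forces $\mathbf{T}_v^{\mathcal{G}_1}\neq\mathbf{T}_v^{\mathcal{G}_2}$ at the vertex level under the chosen embedding $\Psi$. Your anticipated obstacle about cancellation in $\text{Merge}(\cdot)$ is exactly what the paper's injectivity-of-$f_3$ lemma is meant to preclude, so on that point the two approaches converge.
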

\noindent We leave the proof in Appendix B. 

\subsection{TMoE-Based Decision Making}\label{sec4_2} 
After obtaining the environmental observations, inspired by \cite{pmlr-v235-obando-ceron24b}, we adopt a policy mixture architecture with the previously introduced TSD router to ameliorate the performance of MAPPO in large-scale TSC tasks. 
 In particular, we replace the penultimate layer of both the policy and value networks of MAPPO with the proposed TMoE architecture as well. 
 In detail, we divide each subgraph representation that has been compressed with an MLP layer into $B$ equal-size segments, effectively creating $B$ identifiable ``tokens".  
 % The tokenization strategy preserves a relatively holistic view of sub-graph within each token, thereby facilitating the identification of distinct patterns from the topologically sensitive router.
 % As a kind of ``graph-like" tokens, they can be subsequently passed through different policy sub-networks with similar designs in Eqs. \eqref{eq:moe_tt_mixing_0}, \eqref{eq:moe_tt_mixing_1} and \eqref{eq:moe_tt_mixing_2}.
 % To ensure compatibility with graph-level embeddings, the topological descriptors are further integrated via an MLP layer.
 % That is,
 % \begin{equation}
 %     \mathbf{T}'_v = \text{MLP}_3(\mathbf{T}_v).
 % \end{equation}
 % In essence, the TMoE structure can also be viewed as a composite framework that consists of a weight network responsible for routing input sequences, a set of expert networks for feature computation, and a subsequent routing aggregation network. 
 % For simplicity, we collectively treat the $\text{Proj}(\text{Route}(\cdot))$ processing as a single complex nonlinear function while keeping the third component $\text{Merge}(\cdot)$ unchanged.
 % Each nonlinear function, as an expert, is expected to extract different pivotal features from the input, leading to heterogeneous views of the current environment.
 Mathematically, the resulting policy under the TMoE-enhanced architecture can be modified as
 \begin{equation}
  \label{eq:MoE_policy}
  \pi(a^{(t)}_i \mid \mathbf{o}^{(t)}_i)= \sum\limits_{b=1}^{B} {S}^P_{b}\pi_b(a^{(t)}_i \mid \mathbf{o}^{(t)}_i),
 \end{equation}
 where $\pi_b(a^{(t)}_i \mid \mathbf{o}^{(t)}_i)$ is the expert-specific policy that is comprised by the nonlinear transformation within the TMoE framework.

% % As previously discussed, MAPPO introduces entropy gradients during training to foster exploration. 
% With the incorporation of the TMoE architecture, policy optimization is influenced not only by direct updates to the policy parameters but also by the dynamic adjustment of expert weights, resulting in more nuanced entropy-driven exploration.
% The addition of the routing module as well as the counterpart experts further enhance the model’s exploratory capabilities during training, enabling it to adapt to the highly dynamic and complex nature of large-scale environments.

\section{Experimental Results and Discussions}\label{sec5}

\subsection{Experimental Settings}\label{4B}
% In this section, we detail the experimental settings used to validate the proposed TMoE architecture for large-scale MARL in TSC tasks.
% Our simulations replicate urban traffic patterns across multiple intersections, with each traffic light modeled as an intelligent agent. 
\subsubsection{Scenario Settings}
\begin{figure*}[tbp]
    \centering
      \subfloat[The map of Shenzhen]{\includegraphics[width = 0.33\textwidth]{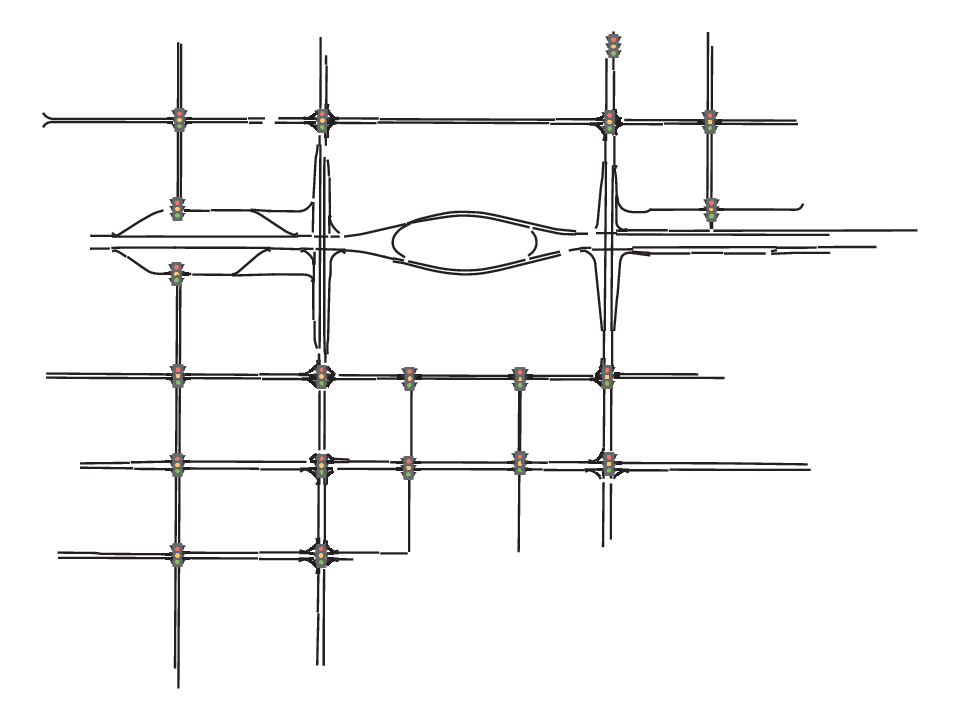}}
      \subfloat[The map of Shanghai]{\includegraphics[width = 0.33\textwidth]{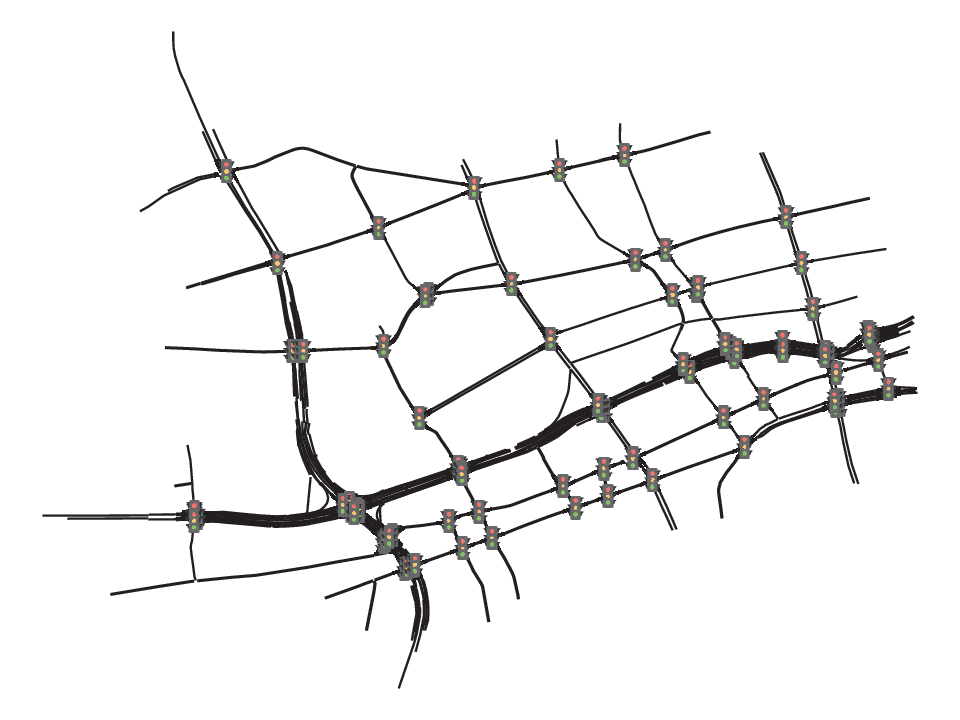}}
      \subfloat[The map of Hangzhou]{\includegraphics[width = 0.33\textwidth]{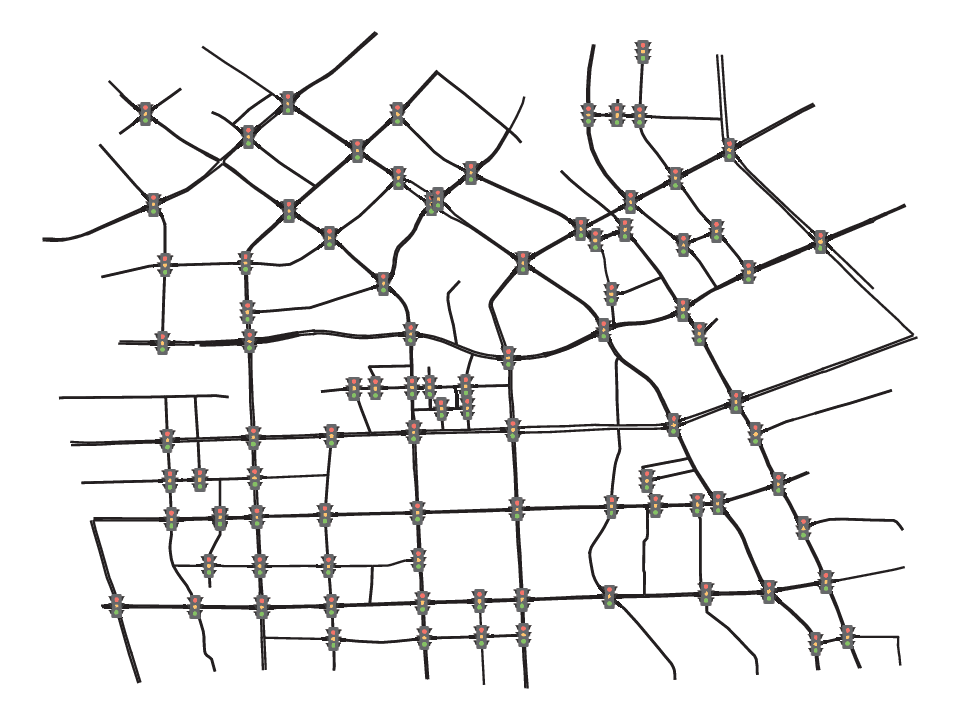}}
    \caption{The map of Shenzhen, Shanghai, and Hangzhou.}
    \label{5-fig-map}
 \end{figure*}
 In this section, we evaluate the performance of TGN-TMoE-enabled large-scale MARL in TSC tasks. As illustrated in Fig. \ref{5-fig-map}, our simulations replicate urban real-world traffic intersections in three representative cases, i.e., ``small-scale map with high traffic density” (Shenzhen),``small-scale map with moderate traffic density” (Shanghai), and ``large-scale map with high traffic density” (Hangzhou).
 Specifically, the Shenzhen and Shanghai maps, covering $5\text{km} \times 5\text{km}$, contain $20$ and $50$ traffic light agents, respectively. 
 The Hangzhou map spans an area of $10\text{km} \times 10\text{km}$ and includes approximately $100$ signalized intersections.
 For simplicity and computational efficiency, all three maps retain only the main drivable roads, omitting minor lanes and non-motorized paths.

 The experiments are conducted using the standard traffic simulation platform SUMO \cite{SUMO2018}, which provides high-fidelity vehicle dynamics and flexible traffic network configurations.
 All algorithms are implemented within the FLOW framework \cite{wu2021flow}, which integrates microscopic traffic simulation with deep RL algorithms, facilitating the construction of complex traffic control scenarios.
 Traffic flow is generated with SUMO’s built-in traffic tool, which assigns each vehicle a unique random route: Shenzhen runs for $1,200$ seconds with $2,300$ vehicles; Shanghai for $600$ seconds with more than $1,200$ vehicles; and Hangzhou for $800$ seconds with $4,000$ vehicles.  This setting ensures diversity in traffic patterns and improves the generalization capability of the learning environment. 
 Notably, these durations are intentionally set to half of the full simulation time to balance data diversity and training efficiency.
 A summary of the experiment configurations is presented in Table \ref{5-tab-eset}. 

 \begin{table}[tbp]
    \centering
    \caption{Environment configurations of each map.}\label{5-tab-eset}
    \begin{tabular}{c|cccc}
    \toprule
    City & Scale & Duration & No. of Veh & No. of TLC \\ \midrule
    Shenzhen & $5\times 5$ & $1,200$ & $> 2,300$ & $20$ \\ \hline
    Shanghai & $5\times 5$ & $600$ & $> 1,200$ & $50$ \\ \hline
    Hangzhou & $10\times 10$ & $800$ & $> 4,000$ & $92$ \\ \bottomrule
    \end{tabular}
\end{table}
 
 All simulations use a fixed time-step of $1$ second, and unless stated otherwise, agents make decisions at each step. 
 During training, each iteration corresponds to a complete simulation episode, and all models are trained for at least $600$ iterations. 
 To accelerate data collection and model updates, we employ $15$ parallel environments. 
 Accordingly, all subsequent results reported in this paper are averaged across these parallel environments.
 
% To ensure a comprehensive evaluation, the following key elements of the experimental scenario are carefully defined as follows:
 \subsubsection{Algorithmic Settings}
  \begin{table}[!t]
  \centering
  \caption{Performance comparison under varying numbers of TMoE experts on the Shenzhen map. \textbf{Bold} values are best results, the \textit{italics} indicate second-best results.}
  \begin{tabular}{lcccc}
    \toprule
    No. of Experts & 2       & 4              & 8       & 16            \\
    \midrule
    Fuel           & 14.212  & {14.073} & \textit{14.069}  & \textbf{14.023}  \\
    $\mathrm{CO_2}$ & 334.189 & {330.900} & \textit{330.824} & \textbf{329.736} \\
    Duration       & 195.974 & {196.723} & \textit{193.742} & \textbf{189.863} \\
    Delay          & 89.497  & {88.679}  & \textbf{87.231}  & \textit{87.883}  \\
    \bottomrule
  \end{tabular}
  \label{5-tab-numE}
\end{table}
 For TGN-TMoE, the optimal number of experts therein is set as $16$ due to its superiority on the Shenzhen traffic network in Table~\ref{5-tab-numE}. For decision making, we adopt the MAPPO with the definitions of state, action, and reward as below.
 
 \textbf{State}: The state primarily involves both the traffic light status and the surrounding traffic flow conditions.
 The former regulates traffic conditions in each direction at the controlled intersection and is sequentially packed into a vector, where ``$1$" represents a red light, ``$0$" is a green light, and ``$\frac{1}{2}$" denotes a yellow light.
 In addition to the current signal phase, the state also includes auxiliary information such as the upcoming phase in each direction and the remaining time of the current phase, which are critical for accurate decision-making.
 Regarding traffic flow, the state incorporates essential features of each lane including the queue length, the average speed of vehicles, and the directional layout of incoming and outgoing.
 All components are concatenated and supplemented with necessary alignment masks to construct a unified state vector $\mathbf{s}^{(t)}$, serving as the input to the observation processing and decision-making modules.

 \textbf{Action}: To streamline the action space, we adopt a binary representation that indicates whether the current traffic light phase should be switched to the next phase within a predefined cycle. Specifically, the action value $a=1$ instructs the agent to switch to the next phase, while $a=0$ indicates maintaining the current phase.

 \textbf{Reward}: We design a reward function $\mathcal{R}^{(t)} = -\alpha \mathcal{R}^{(t)}_{\text{sta}} -\beta \mathcal{R}^{(t)}_{\text{wait}}$ with tunable hyperparameters $\alpha > 0$ and $\beta > 0$, that accounts for both traffic flow efficiency $\mathcal{R}^{(t)}_{\text{sta}}$ and the average waiting time of vehicles $\mathcal{R}^{(t)}_{\text{wait}}$. Consistent with ``pressure” \cite{wei2019presslight}, $\mathcal{R}^{(t)}_{\text{sta}}$ evaluates the imbalance in vehicle density between incoming and outgoing lanes with
\begin{equation}
    \mathcal{R}^{(t)}_{\text{sta}} = \big| \sum\limits_{l \in \mathcal{L}_\text{in} }\frac{x(l,t)}{x_{\max}}- \sum\limits_{l \in \mathcal{L}_\text{out} }\frac{x(l,t)}{x_{\max}}\big|,
\end{equation}
 where $\frac{x(l,t)}{x_{\max}}$ represents the normalized vehicle density for a lane $l$ in the in-lane set $\mathcal{L}_\text{in}$ or out-lane set $\mathcal{L}_\text{out}$ at $t$.
 Here, $x(l,t)$ denotes the number of vehicles current queued on lane $l$, reflecting the instantaneous queue length, and $x_{\max}$ is the maximum lane capacity in the map used for normalization. 
 In addition, $\mathcal{R}^{(t)}_{\text{wait}}$ records the cumulative delay on each incoming lane up to time $t$ with $T_w(l, t) = \{T^l_{v_0}, \cdots, T^l_{v_m}\}, \forall l \in \mathcal{L}_\text{in}$ \cite{zhang2024survey}, where $m \in\{1, \cdots, M\}$ denotes the vehicle ID on incoming lane $l$ and $M$ is the total vehicle number in that lane. 
 However, due to limitations in rule-based control, rare events such as collisions may cause unrealistically high delays, which are typically handled by resetting the environment.
 Nevertheless, frequent environment resets may undermine the diversity of samples for MARL training in large-scale simulations.
 To address this, we introduce a waiting threshold $T_{\text{tp}}$ to ``teleport” vehicles that exceed a rational waiting limit and normalize the reward to prevent training instability caused by sparsely scaled rewards. In other words,
\begin{equation}
  \mathcal{R}^{(t)}_{\text{wait}} = \frac{1}{M T_{\text{tp}}} \sum\limits_{l \in \mathcal{L}_{in} }\sum\limits_{m=1}^{M} {T^l_{v_m}}.
\end{equation}
 
 Finally, default parameter settings are summarized in Appendix C.

\subsubsection{Baselines}
We adopt some prevalent and SOTA baselines in all three scenarios.

\textbf{FixedTime:} 
The FixedTime baseline operates on a static timing plan that remains unchanged regardless of varying traffic conditions, representing typical urban traffic systems without advanced traffic signal controllers or RL-based CAV coordination. All signal durations are pre-generated by SUMO and vary across intersections.

\textbf{GLOSA:} 
 GLOSA \cite{suzuki2018new} is a non-DRL method that jointly manages traffic light signals and CAVs. It adjusts the speed of CAVs based on the current traffic light phase and the status of each CAV. In our experiments, GLOSA is integrated with adaptive traffic light controllers for joint control. Phase switching is triggered after detecting sufficient time gaps between successive vehicles, resulting in dynamic phase durations.

\textbf{PressLight:} 
PressLight \cite{wei2019presslight} is a classical DRL-based model proposed to optimize traffic light signals for improving intersection throughput. The model’s state includes the number of vehicles on incoming and outgoing roads. Its reward function leverages the concept of ``pressure,” which is specifically designed to enhance intersection throughput.

\textbf{CoTV:} 
CoTV \cite{10144471} is a cooperative control framework that simultaneously optimizes traffic light signals and CAVs through MAPPO. This model trains TLCs and CAVs separately while maintaining efficient coordination through state information exchange within each intersection’s communication range. Such coordination ensures that vehicles can promptly respond to signal changes, thereby reducing traffic reaction latency and substantially enhancing overall road throughput.

\subsubsection{Evaluation Metrics}
We consider the following $4$ metrics:

\textbf{Fuel Consumption (L/100km):} Fuel consumption is defined as the average amount of fuel (in liters) consumed per $100$ kilometers traveled. Lower values generally correspond to higher vehicle speeds and smoother acceleration patterns. In our experiments, fuel consumption is computed using SUMO’s default vehicle emission model.
  
\textbf{$\ensuremath{\mathrm{CO_2}}$ Emissions (g/km):} $\ensuremath{\mathrm{CO_2}}$ emissions demonstrate the average amount of carbon dioxide emitted, measured in grams per kilometer traveled by all vehicles. %As $\ensuremath{\mathrm{CO_2}}$ is a major contributor to greenhouse gas emissions, this metric is critical for evaluating and promoting ecologically sustainable traffic systems. 
The corresponding calculations are also based on SUMO's default vehicle emission model.

\textbf{Travel Time (seconds):} Travel time refers to the duration each vehicle takes to complete its trip within the road network. The average travel time is calculated based on vehicles that successfully complete their trips in a given scenario.% This metric is widely adopted to evaluate overall traffic efficiency.
  
\textbf{Delay (seconds):} Delay quantifies the difference between a vehicle’s actual travel time and its ideal travel time, which is defined as the duration required to complete a trip at the maximum permitted speed. %It highlights the gap between current performance and the theoretical optimum, and is often more sensitive to efficiency improvements than travel time.

\subsection{Results}
% In this section, we employ an enhanced MAPPO framework that incorporates the TGN-TMoE architecture as the control policy for each agent.
% Prior to conducting large-scale evaluations, it is essential to determine the optimal number of experts in the TMoE module to ensure consistent and robust performance across different scenarios.
% To this end, we perform an ablation study on the Shenzhen traffic network, with the results summarized in Table~\ref{5-tab-numE}.
% As the number of experts increases from 2 to 16, we observe a steady improvement across all key performance metrics.
% In particular, the 16-expert configuration achieves the best performance on every metric, while the 4-expert setting consistently ranks second.
% Accordingly, we adopt the 16-expert configuration as the default setup in our subsequent experiments on other urban networks.

\subsubsection{Performance Comparison}

\begin{table*}[!t]
\centering
\caption{Performance comparison of different methods across three traffic scenarios (i.e., Shanghai, Shenzhen, Hangzhou). 
\textbf{Bold} values indicate the best performance for each metric, while \textit{italic} values imply the second-best.}
\begin{tabular}{l|l|cccccccc}
\toprule
\textbf{City} & \textbf{Metric} & Fixed & GLOSA & PressLight & CoTV & TGN & TTGN & TGN-MoE & TGN-TMoE \\
\midrule
\multirow{4}{*}{Shenzhen}
 & Fuel     & 17.368  & 17.157  & 14.905    & \textit{14.381} & 15.400 & 15.371  & 15.065  & \textbf{14.023} \\
 & $\mathrm{CO_2}$      & 408.382 & 403.425 & 350.490   & \textit{338.162} & 362.112 & 361.442 & 354.226 & \textbf{329.736} \\
 & Travel Time & 233.174 & 229.737 & 208.489   & \textit{198.533} & 213.193  & 211.428 & 208.819 & \textbf{189.863}\\
 & Delay    & 131.596 & 128.248 & 101.895   & \textit{93.793} & 107.398 & 106.665  & 103.329 & \textbf{87.883} \\
\midrule
\multirow{4}{*}{Shanghai}
 & Fuel     & 17.866  & 17.922  & 15.624    & \textbf{14.770} & 15.281 & 15.308  & 15.119 & \textit{14.844} \\
 & $\mathrm{CO_2}$      & 420.090 & 421.410 & 367.389  & \textbf{347.312} & 359.327 & 359.950  & 355.515 & \textit{349.044} \\
 & Travel Time & 258.885 & 259.398 & 244.368   & \textbf{231.485} & 238.211 & 237.412 & 235.598 & \textit{231.556} \\
 & Delay    & 142.193 & 142.956 & 119.236   & \textbf{106.027} & 114.217 & 114.438 & 114.417 & \textit{106.875} \\
\midrule
\multirow{4}{*}{Hangzhou}
 & Fuel     & 12.025  & 12.003  & 12.229    & \textbf{11.979} & 12.219 & 12.211   & 12.195  & \textit{12.003} \\
 & $\mathrm{CO_2}$      & 282.767 & 282.244 & 350.490   & \textbf{281.676} & 287.316 & 287.131 & 286.746 & \textit{282.215} \\
 & Travel Time & 324.725 & 325.043 & 313.843   & {312.734} & 309.821 & \textit{310.101} & 312.402 & \textbf{309.800} \\
 & Delay    & 119.467 & 119.547 & 117.937   & \textit{113.128} & 116.642 & 116.356  & 116.721 & \textbf{112.481} \\
\bottomrule
\end{tabular}
\label{5-tab-merged-city-comparison}
\end{table*}   
\begin{figure*}[tbp]
    \centering
    \subfloat[Shenzhen]{\includegraphics[width = 0.33\textwidth]{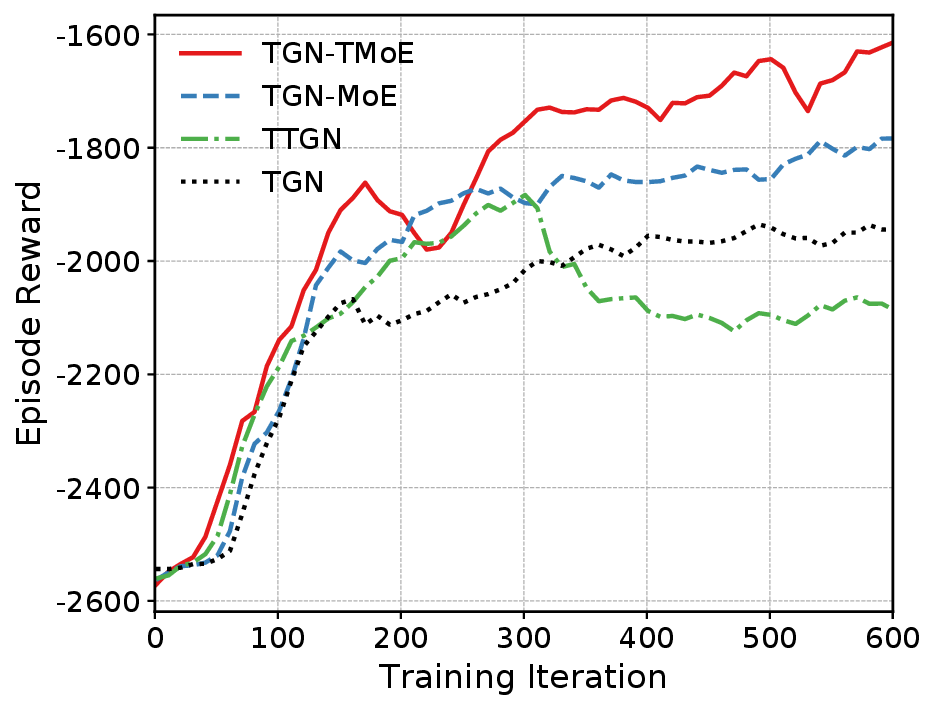}}
    \subfloat[Shanghai]{\includegraphics[width = 0.33\textwidth]{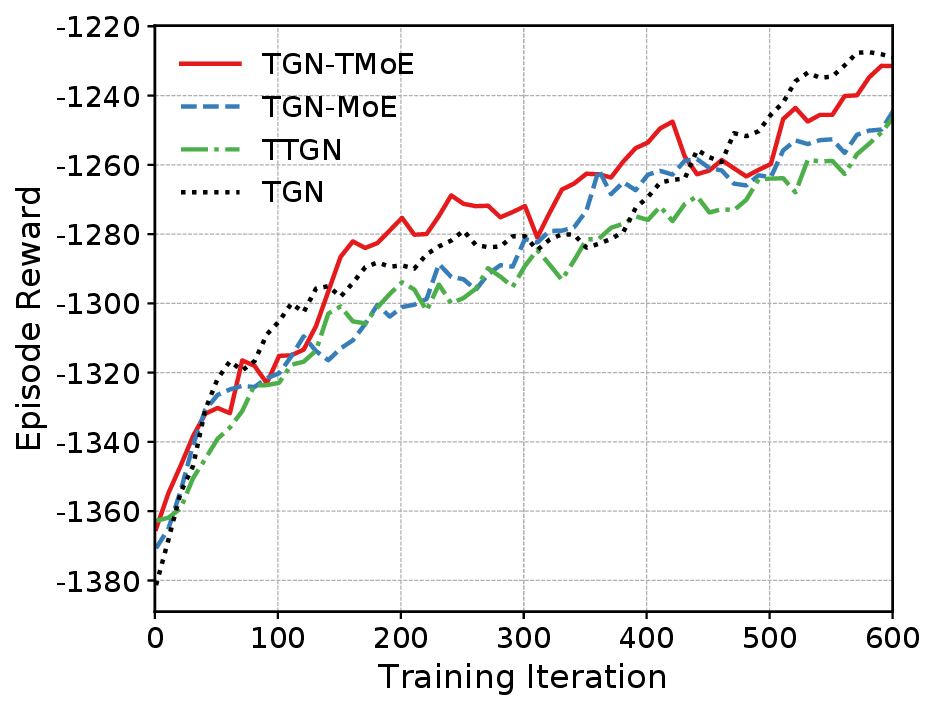}}
    \subfloat[Hangzhou]{\includegraphics[width = 0.33\textwidth]{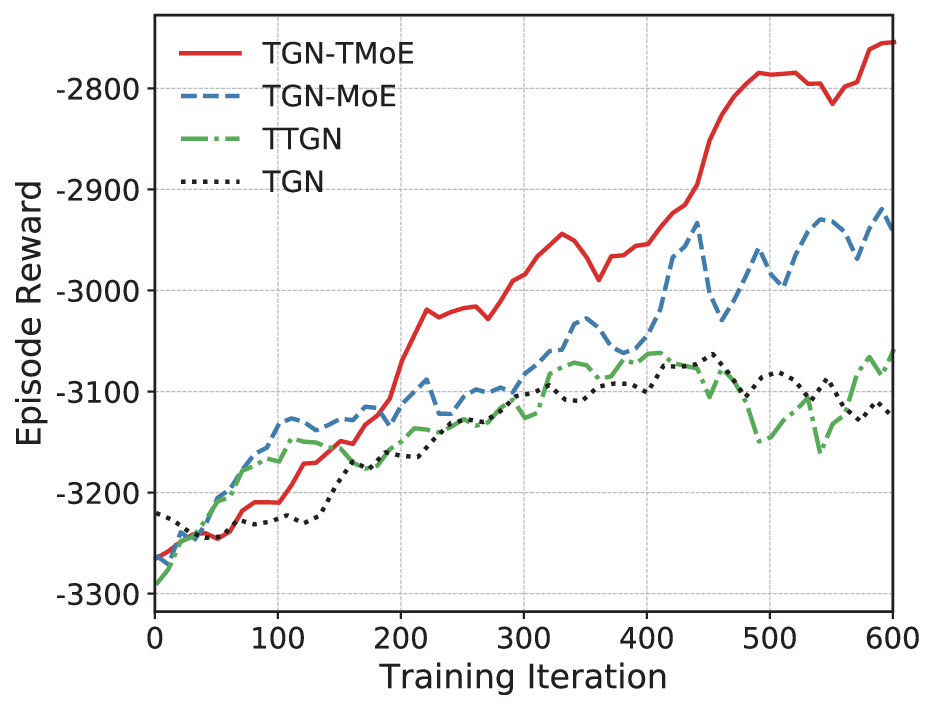}}
    \caption{Episode reward evolution in Shenzhen, Shanghai, and Hangzhou.}
    \label{5-fig-reward}
 \end{figure*}

We summarize the performance comparison between TGN-TMoE and other baselines in Table \ref{5-tab-merged-city-comparison}. It can be observed that TGN-TMoE achieves the best or second-best performance across nearly all metrics in Shenzhen, Shanghai, and Hangzhou. 
Notably, in the Shenzhen scenario, our model outperforms all baselines on every core metric, highlighting its superior adaptability to complex topologies and dynamic traffic conditions.
While TGN, TTGN and TGN-MoE show moderate improvements over the classical TSC methodologies, they still fall short of TGN-TMoE, highlighting the benefit of incorporating topological priors and routing mechanisms in expert allocation.
In Shanghai, CoTV slightly outperforms the TGN-TMoE framework and similar results also apply for 
Hangzhou, the ``large-scale map with high traffic density” scenario. 
The reasons lie in CoTV's dense sensing to quickly respond to traffic changes through vehicle-to-infrastructure communication. 
However, such communication limits its scalability and practicality in larger urban networks.
By contrast, TGN-TMoE offers a topology-aware yet infrastructure-independent solution that yields comparable performance and is more suitable for real-world deployment and generalization across diverse scenarios.
%Moreover, although TGN achieves a higher reward than TGN-TMoE in Shanghai, as depicted in Fig. \ref{5-fig-reward}(b), its inferior overall performance as well as the training trend in Fig. \ref{5-fig-num_v}(b) imply that the resulting policy may converge to sub-optimal decisions.
The fact further confirms the necessity of TMoE-based decision and observation refinement in structurally heterogeneous environments. 
% TGN-TMoE achieves only marginal gains over some baselines. 
% These observations suggest that in extremely congested urban networks, signal control alone may be insufficient to eliminate bottlenecks. Future improvements could be achieved by integrating cooperative CAV control for more proactive traffic resolution.
\subsubsection{Ablation Studies of TGN-TMoE}

\begin{figure*}[tbp]
    \centering
    \subfloat[Shenzhen]{\includegraphics[width = 0.33\textwidth]{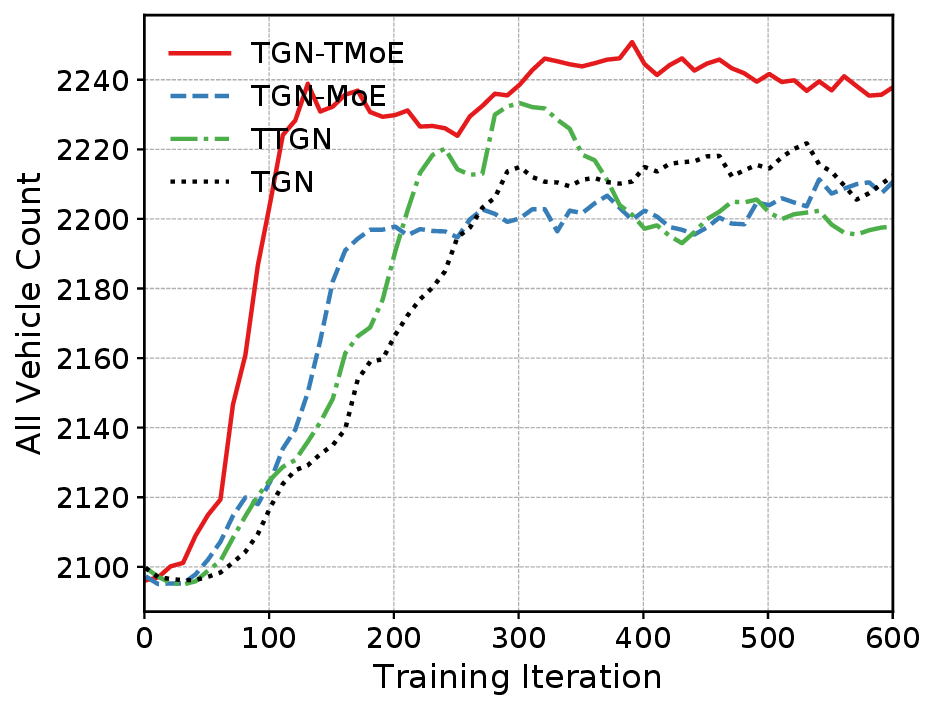}}
    \subfloat[Shanghai]{\includegraphics[width = 0.33\textwidth]{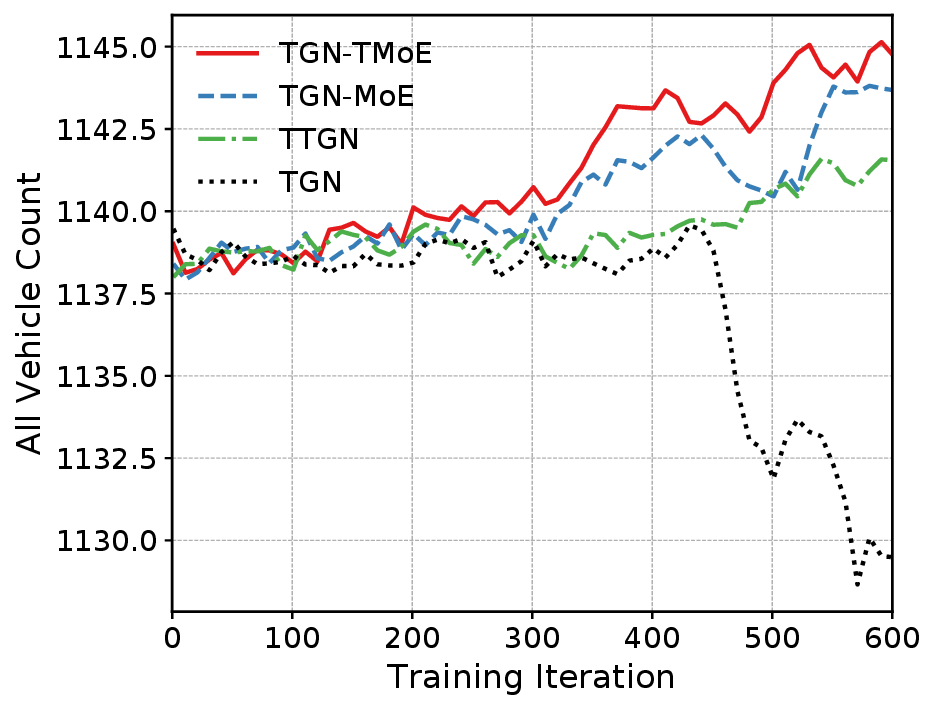}}
    \subfloat[Hangzhou]{\includegraphics[width = 0.33\textwidth]{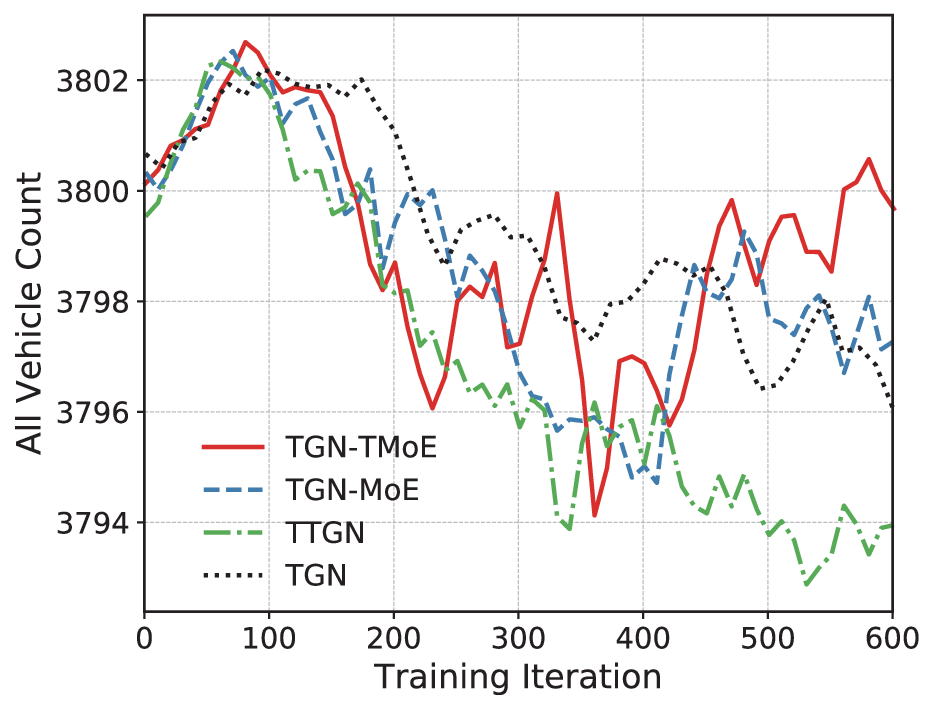}}
    % \\
    % \subfloat[Shenzhen]{\includegraphics[width = 0.33\textwidth]{Figure/SZ_waiting_time_mean_comparison_nature_1.eps}}
    % \subfloat[Shanghai]{\includegraphics[width = 0.33\textwidth]{Figure/SH_waiting_time_mean_comparison_nature_1.eps}}
    % \subfloat[Hangzhou]{\includegraphics[width = 0.33\textwidth]{Figure/HZ_waiting_time_mean_comparison_nature_0.eps}}
    \caption{The involved vehicle number in Shenzhen, Shanghai, and Hangzhou.}
    \label{5-fig-num_v}
 \end{figure*}
 \begin{figure*}[tbp]
  \centering
  \subfloat[Shenzhen]{\includegraphics[width = 0.33\textwidth]{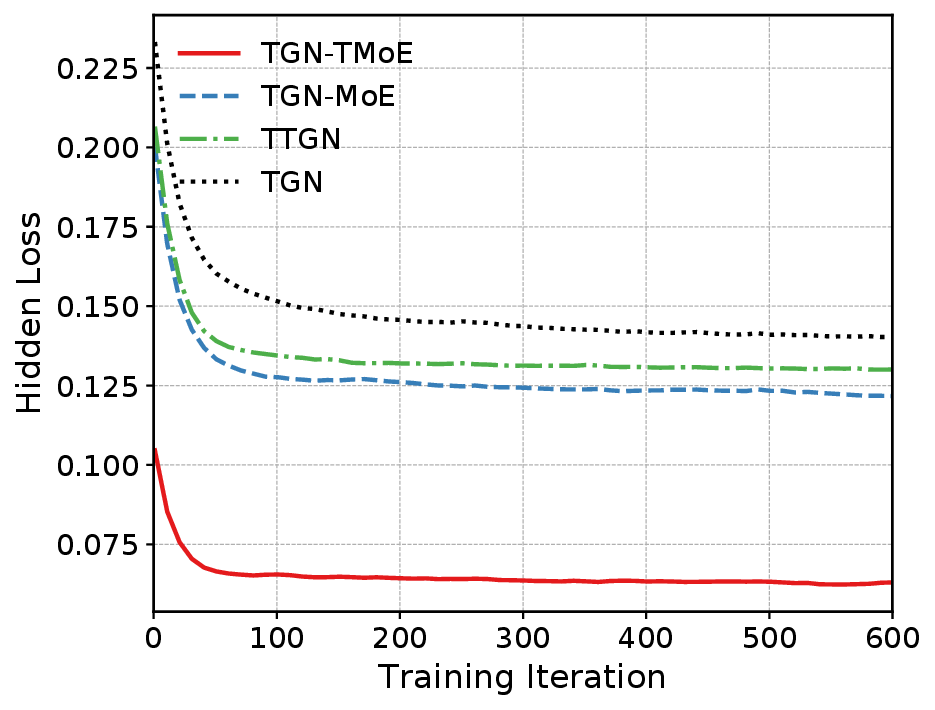}}
  \subfloat[Shanghai]{\includegraphics[width = 0.33\textwidth]{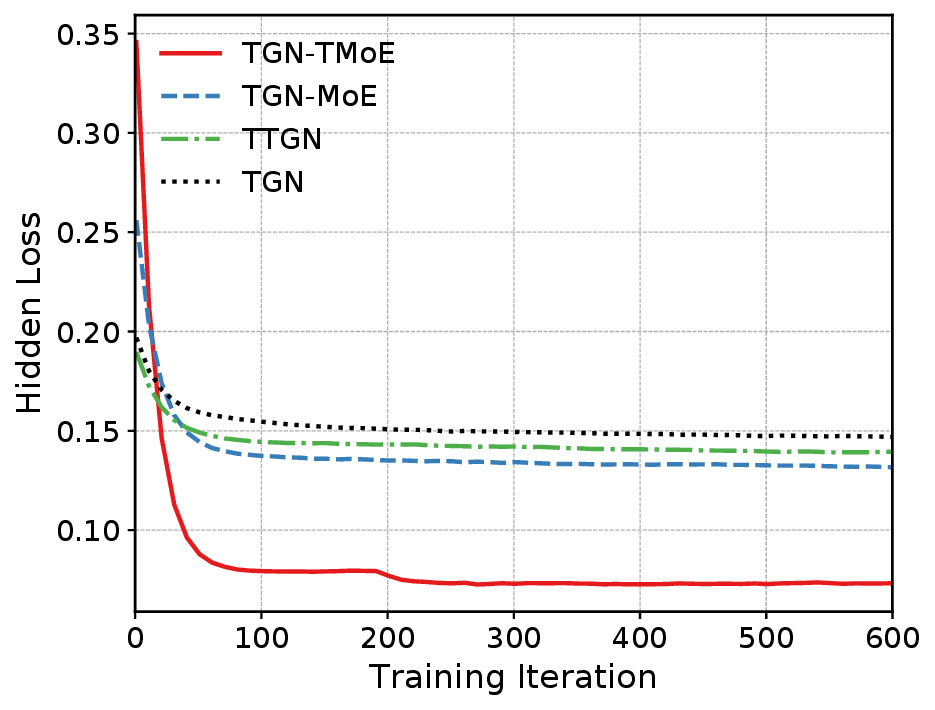}}
  \subfloat[Hangzhou]{\includegraphics[width = 0.33\textwidth]{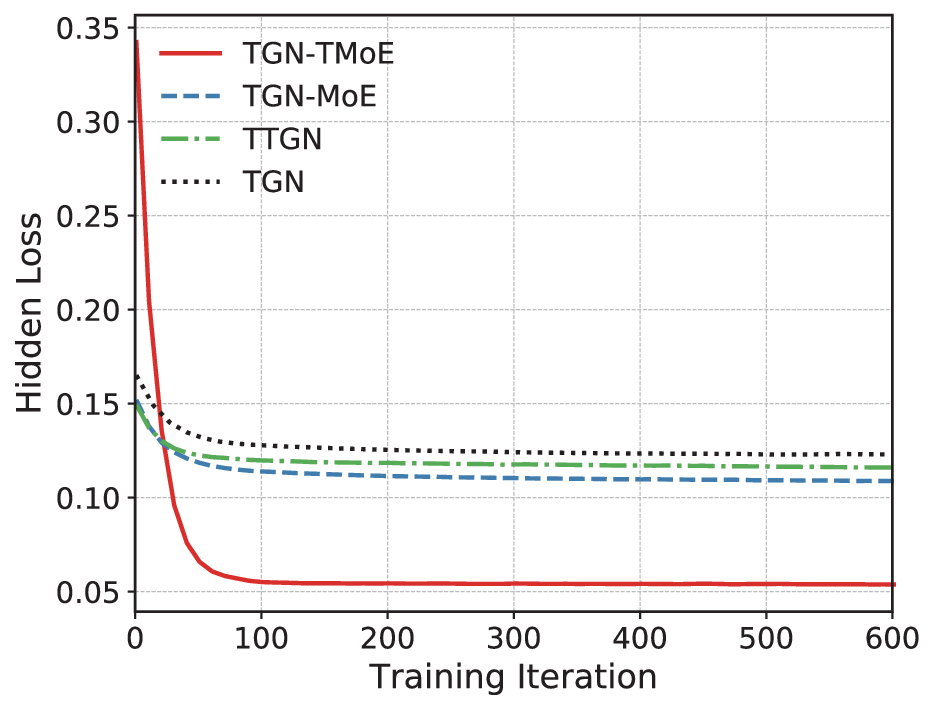}}
  \caption{MSE loss during training.}
  \label{5-fig-loss}
\end{figure*}
To further validate the effectiveness, we conduct a series of ablation experiments by progressively modifying or simplifying the core components of our TGN-TMoE. 
Specifically, we compare TGN-TMoE with the original \textbf{TGN} model without any topological enhancement, a topology-enhanced variant (\textbf{TTGN}) that incorporates only the TDA-based structural features, \textbf{TGN-MoE} that integrates a standard mixture-of-experts module. 
%These ablation settings are designed to isolate and examine the individual contributions of topological enhancement and expert-based specialization to the overall control performance.

Besides the numerical results in Table \ref{5-tab-merged-city-comparison}, Fig. \ref{5-fig-reward} also depicts the reward trends across three urban traffic scenarios.
As shown in the figure, TGN-TMoE achieves consistently higher average rewards and exhibits faster convergence with greater training stability in Shenzhen. 
However, in the Shanghai and Hangzhou scenarios, the original TGN model unexpectedly yields the highest average rewards during the final training stages.
To interpret such phenomenon, we additionally monitor the number of participating vehicles at each time step. 
The results in Fig. \ref{5-fig-num_v} reveal that, under the TGN model, the number of active vehicles in the environment decreases significantly in the later phases of training. 
Considering the reward function is based on the absolute value of pressure imbalance, this may suggest that MAPPO with TGN implicitly learns a shortcut policy that incurs increased congestion in both incoming and outgoing lanes. 
Such congestion suppresses the pressure metric, thereby inflating the reward without improving traffic flow.
Although TGN appears to outperform other methods in terms of raw reward values in these cases, the underlying traffic dynamics suggest compromised throughput and poor system behavior. 
In contrast, TGN-TMoE maintains stable traffic participation while achieving favorable reward trends, highlighting its robustness and practical effectiveness.
Overall, these findings underscore the importance of structural awareness in traffic signal control. 
The TSD routing mechanism in TGN-TMoE plays a critical role in enhancing the structure learning for TGN, thereby improving the model’s representational capacity and decision-making reliability in complex dynamic environments.

To validate the contribution of individual modules like TDA and MoE, we compare the MSE loss during training, as shown in Fig. \ref{5-fig-loss}. 
The results demonstrate that TGN-TMoE consistently achieves the lowest loss across all scenarios, highlighting its superior capability in graph representation learning.
Furthermore, a clear downward in loss can be observed as more structural components (e.g., TDA and MoE) are incrementally introduced into the TGN. 
The consistent reduction in training loss validates the necessity and effectiveness of each proposed module. 
% It also demonstrates that both topological enhancement and expert-based specialization contribute positively and synergistically to improving the model’s learning efficiency and accuracy. 

\begin{figure*}[tbp]
    \centering
    \subfloat[TGN-MoE]{\includegraphics[width = 0.4\textwidth]{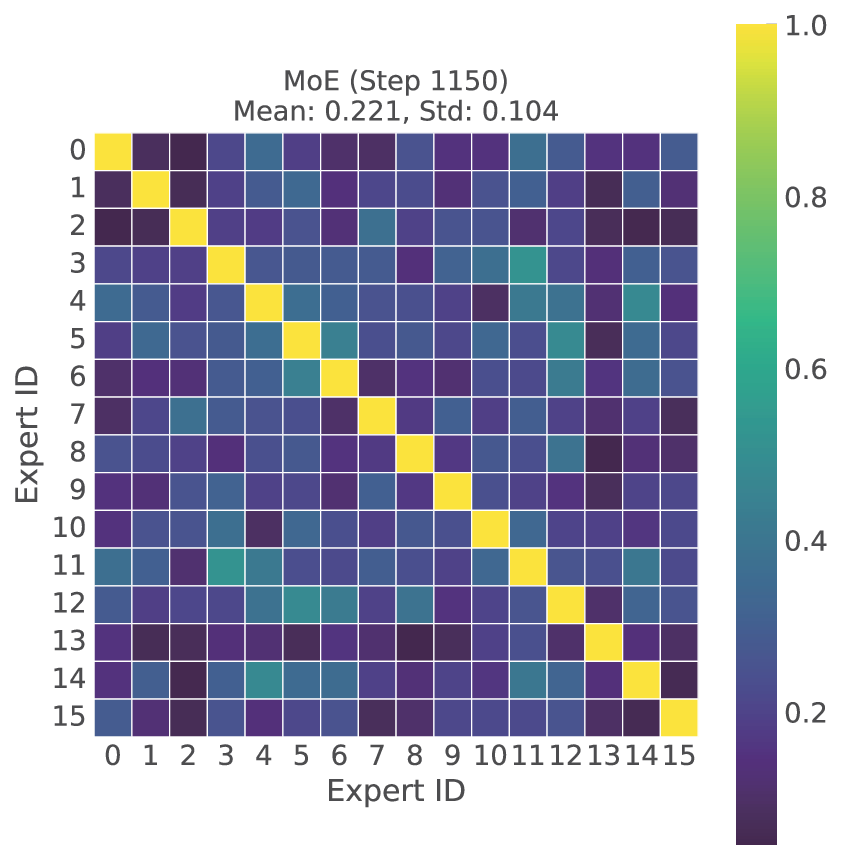}}
    % \hfill
    % \subfloat[TGN-MoE (Shanghai)]{\includegraphics[width = 0.30\textwidth]{Figure/SH_MoE_step_550.pdf}}
    % \hfill
    % \subfloat[TGN-MoE (Hangzhou)]{\includegraphics[width = 0.30\textwidth]{Figure/HZ_MoE_step_750.pdf}}
    % \hfill
    % \\
    \subfloat[TGN-TMoE]{\includegraphics[width = 0.4\textwidth]{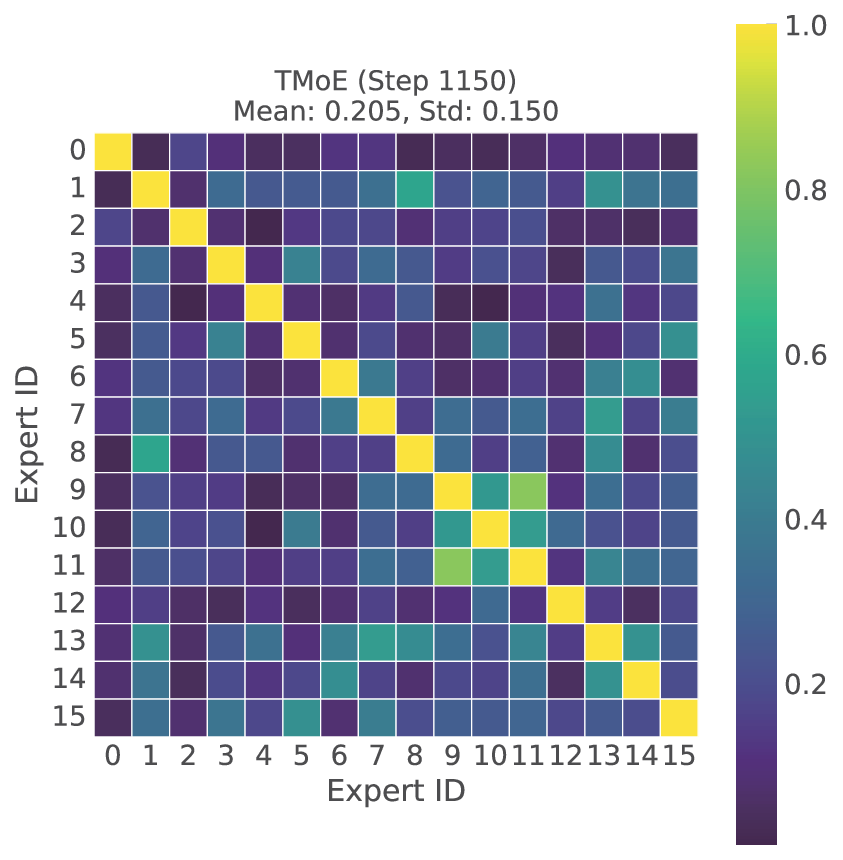}}
    % \hfill
    % \subfloat[TGN-TMoE (Shanghai)]{\includegraphics[width = 0.30\textwidth]{Figure/SH_TMoE_step_550.pdf}}
    % \hfill
    % \subfloat[TGN-TMoE (Hangzhou)]{\includegraphics[width = 0.30\textwidth]{Figure/HZ_TMoE_step_750.pdf}}
    \caption{The comparisons of expert output similarities between TGN-MoE and TGN-TMoE architectures in Shenzhen.}
    \label{5-fig-sim}
\end{figure*}
\subsubsection{Analyses of TGN-TMoE}

To further understand the representational enhancement of TGN-TMoE, Fig. \ref{5-fig-sim} displays the similarity heatmaps of expert outputs under the TGN-MoE and TGN-TMoE architectures, which are computed based on data collected at a stable stage near the end of training.
It can be observed that TGN-TMoE consistently exhibits lower inter-expert similarity than its TGN-MoE counterparts, as reflected by the overall darker tone in the heatmaps and stronger contrast between off-diagonal elements. 
It indicates reduced similarity and greater output diversity among experts, suggesting a higher degree of functional specialization.
Quantitatively, TGN-TMoE yields lower mean similarity scores (e.g., $0.205$ for TGN-TMoE vs. $0.221$ for TGN-MoE) and generally higher standard deviations across scenarios (e.g., $0.150$ for TGN-TMoE vs. $0.104$ for TGN-MoE), supporting the observation that the TSD routing mechanism encourages expert diversity. 
% In contrast, the TGN-MoE results show brighter, more uniform heatmaps with higher average similarities, implying that its experts tend to produce overlapping or redundant representations.
% This consistent pattern across all cities validates the generalizability of the TGN-TMoE architecture and confirms that incorporating topology-aware expert routing significantly improves the specialization and representational capacity of the model.

Fig. \ref{5-fig-g} illustrates the local topological structure and expert routing behavior at a selected intersection in Shenzhen. 
Specifically, Fig. \ref{5-fig-g}(a) shows the extracted subgraph centered on the controlled traffic light, where red vertices indicate directly controlled lanes and black vertices represent adjacent ones. 
Based on this subgraph, Fig. \ref{5-fig-g}(b) presents the vertex-level topological similarity matrix, revealing groups of vertices with similar connectivity patterns, such as vertices $0$ and $2$.
Figs. \ref{5-fig-g}(c) and \ref{5-fig-g}(d) display the expert weight distributions under the standard MoE and the proposed TMoE models, respectively. Unlike the imbalanced distribution in standard MoE, vertices with similar topological signatures (e.g., $0$ and $2$) are consistently routed to the same experts with close weights, aligning with the clusters observed in Fig.\ref{5-fig-g}(b).
Moreover, all experts in TMoE are actively engaged, with weights distributed more evenly across the vertex set, avoiding both over-concentration and inactivity.
These results validate that the topology-enhanced routing mechanism in TMoE contributes to robust performance in complex, structure-sensitive traffic signal control tasks.

\begin{figure*}[!t]
    \centering
    \subfloat[Subgraph of the local environment]{\includegraphics[width = 0.4\textwidth]{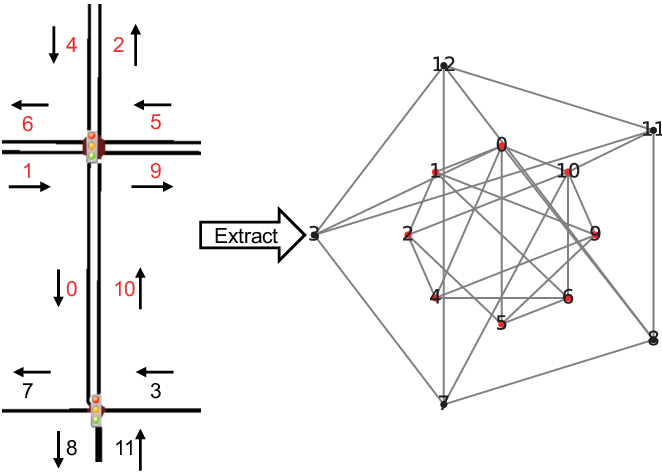}}
    \subfloat[Vertex-level topological similarity]{\includegraphics[width = 0.4\textwidth]{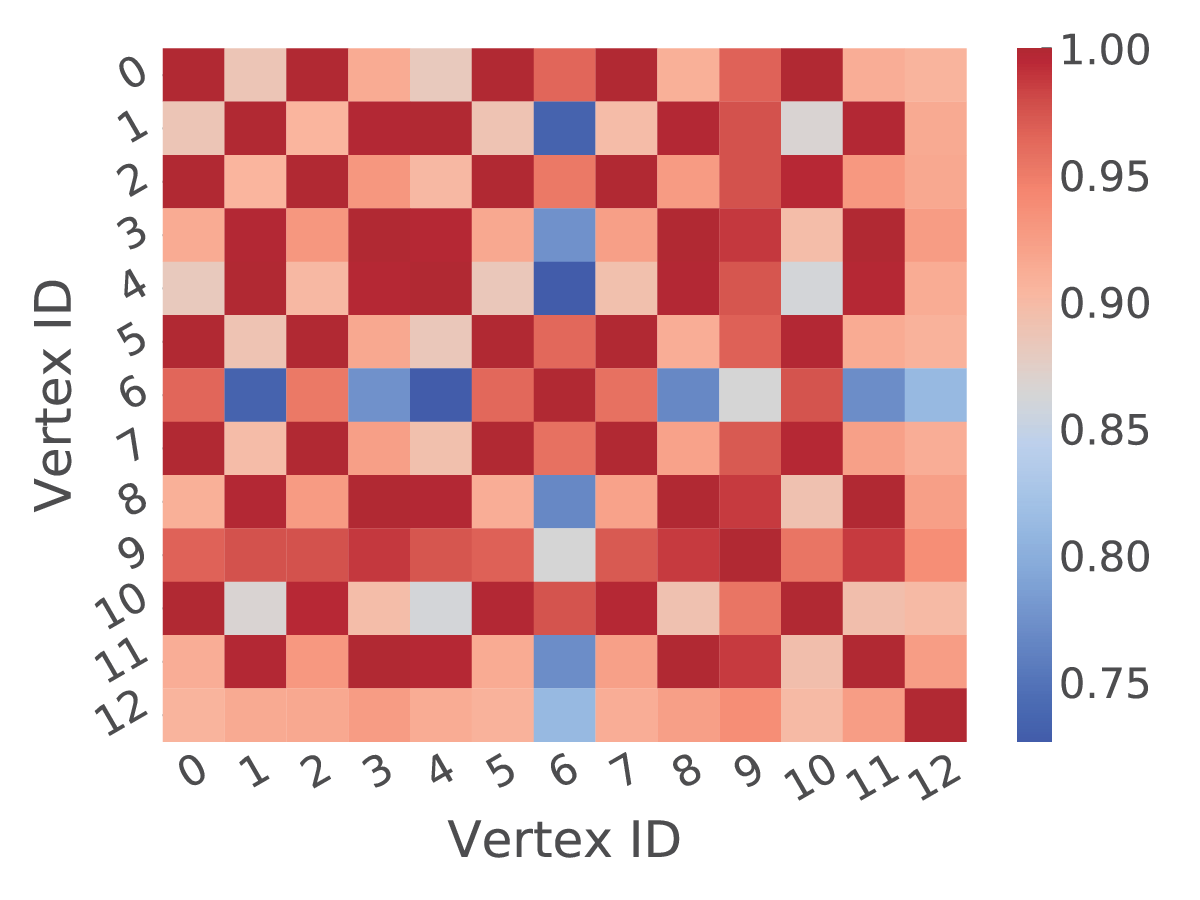}}\\
    \subfloat[Expert weight distribution in MoE]{\includegraphics[width = 0.4\textwidth]{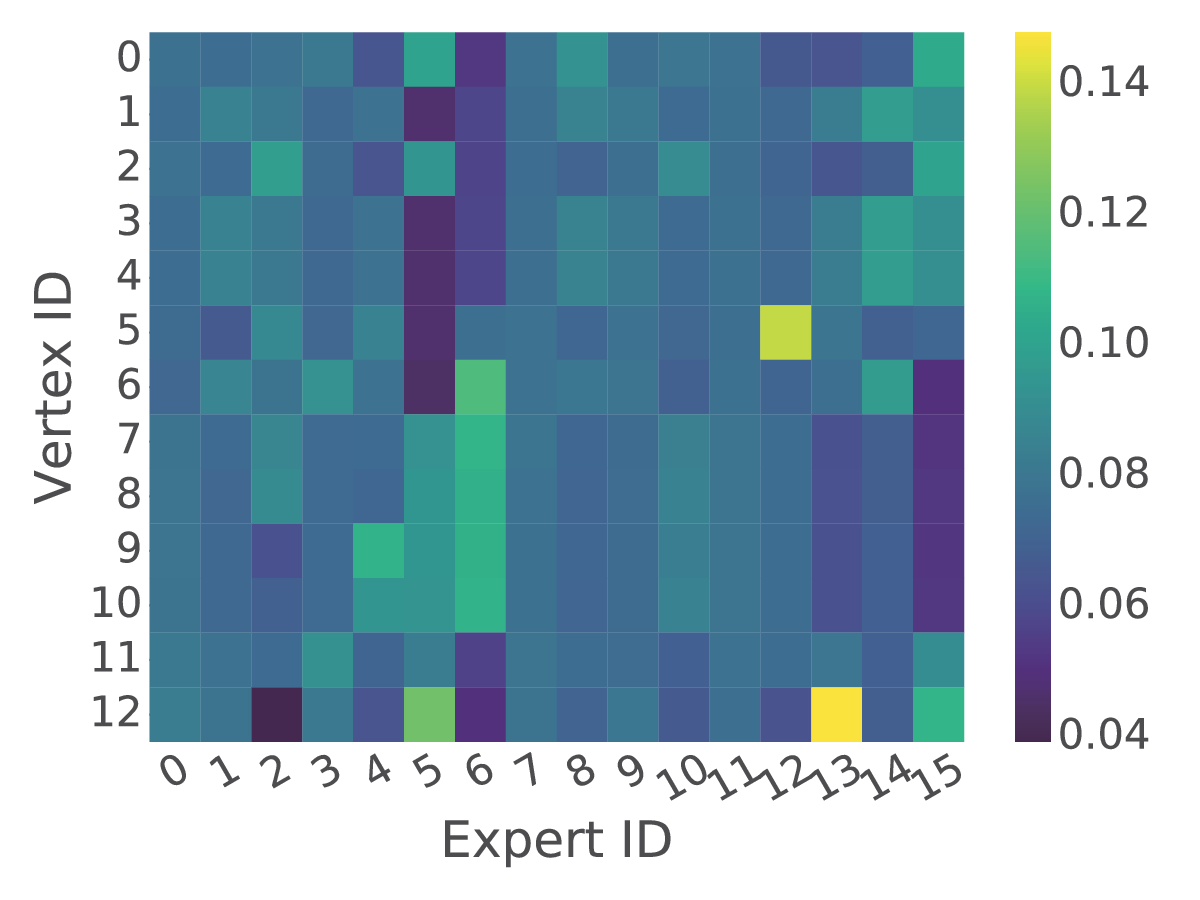}}
    \subfloat[Expert weight distribution in TMoE]{\includegraphics[width = 0.4\textwidth]{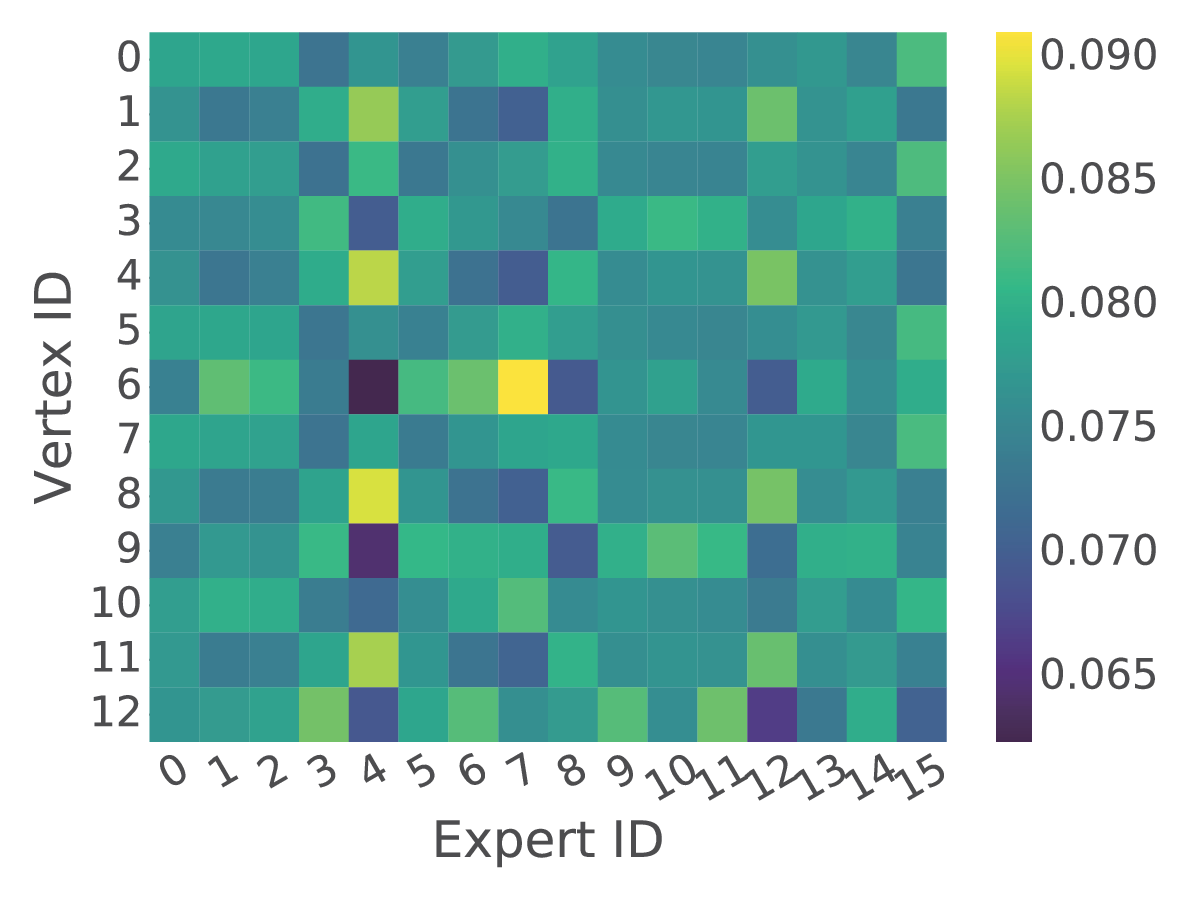}}
    \caption{Illustration of the local topological structure and routing behavior of the model at a randomly selected intersection in Shenzhen.}
    \label{5-fig-g}
\end{figure*}

To validate the expressiveness capabilities of the proposed model, we analyze the hidden vectors fed into the policy module under all three architectures. 
These vectors are projected into a two-dimensional space using t-SNE for visualization.
To ensure diversity, samples are collected from multiple time steps, and their corresponding actions are also recorded for comparison. 
Each point in Fig. \ref{5-fig-tsne} represents an agent with color intensity reflecting the sampling time (lighter indicates earlier).
Red circles highlight instances where $a=1$, corresponding to switching to the next signal phase.
It can be observed that compared to TTGN and TGN-MoE, the incremental involvement of MoE and topology-aware routing in TGN-TMoE yields more distinct separability between $a=1$ samples and others.
Specifically, the $a=1$ samples tend to cluster within a more compact region in the projected space, indicating that the model consistently embeds structurally similar, decision-relevant states into a coherent neighborhood. 
Such tight clustering confirms the stability and generalization ability of TMoE in improving representation quality and structure-aware decision-making.

\begin{figure*}[!t]
    \centering
    
    % \subfloat[TGN (Shanghai)]{\includegraphics[width = 0.33\textwidth]{Figure/summary_tsne_by_step_SH.pdf}}
    % \subfloat[TGN-MoE (Shanghai)]{\includegraphics[width = 0.33\textwidth]{Figure/summary_tsne_by_step_SH_MoE.pdf}}
    % \subfloat[TGN-TMoE (Shanghai)]{\includegraphics[width = 0.33\textwidth]{Figure/summary_tsne_by_step_SH_TMoE.pdf}}
    % \\
    \subfloat[TTGN]{\includegraphics[width = 0.32\textwidth]{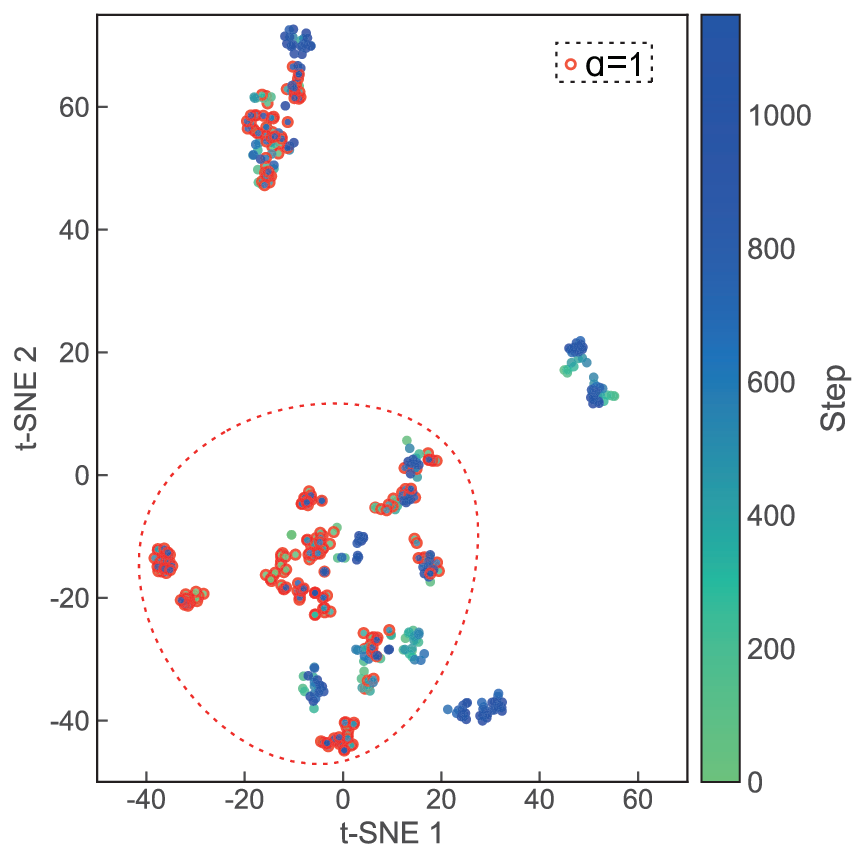}}
    \subfloat[TGN-MoE]{\includegraphics[width = 0.32\textwidth]{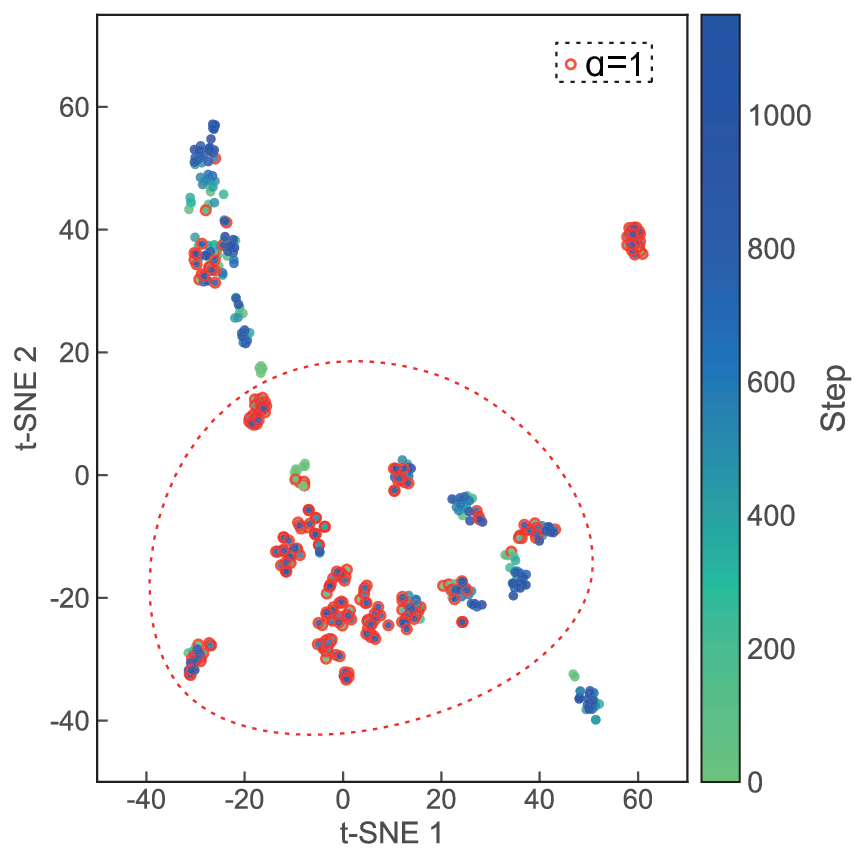}}
    \subfloat[TGN-TMoE]{\includegraphics[width = 0.32\textwidth]{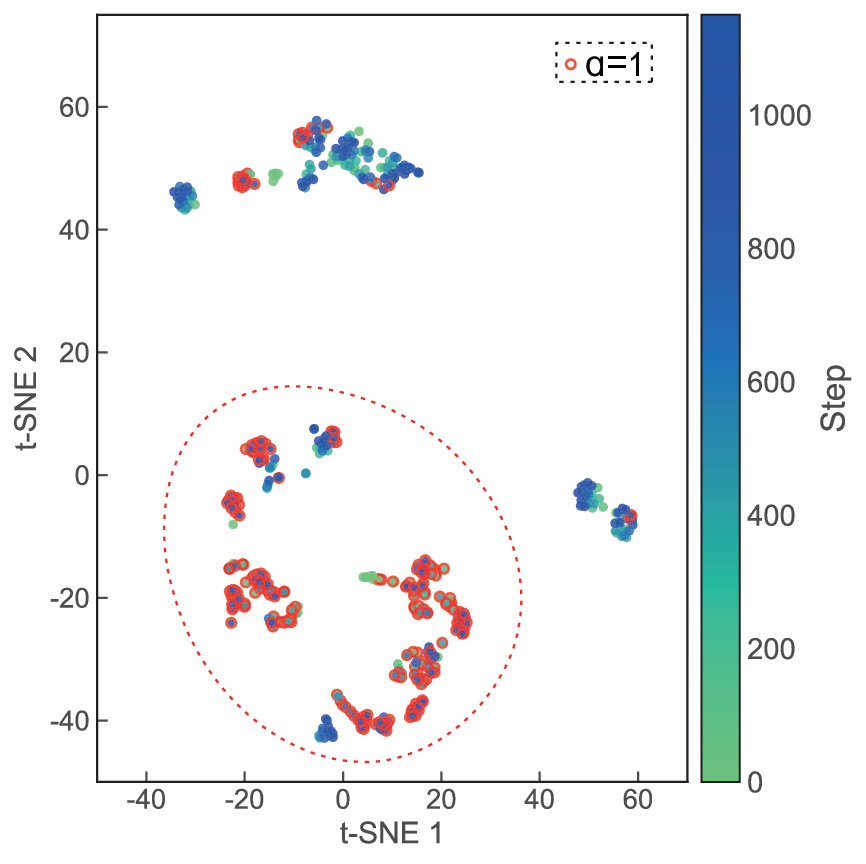}}
    % \\
    % \subfloat[TGN (Hangzhou)]{\includegraphics[width = 0.33\textwidth]{Figure/summary_tsne_by_step_HZ.pdf}}
    % \subfloat[TGN-MoE (Hangzhou)]{\includegraphics[width = 0.33\textwidth]{Figure/summary_tsne_by_step_HZ_MoE.pdf}}
    % \subfloat[TGN-TMoE (Hangzhou)]{\includegraphics[width = 0.33\textwidth]{Figure/summary_tsne_by_step_HZ_TMoE.pdf}}
    
    \caption{t-SNE visualizations of agent representation distributions over time in Shenzhen using TGN, TGN-MoE, and TGN-TMoE architectures.}
    \label{5-fig-tsne}
\end{figure*}

\begin{table}[tbp]
  \centering
  \caption{Performance of {TGN-TMoE} under different control intervals (i.e., $1$s, $3$s, $5$s, $10$s.) in Shenzhen. 
  \textbf{Bold} values indicate the best performance for each metric, showing the impact of control frequency on system efficiency.}
  \begin{tabular}{l|cccc}
  \toprule
  \textbf{Metric} & \textbf{1s} & \textbf{3s} & \textbf{5s} & \textbf{10s} \\
  \midrule
  Fuel     &  \textbf{14.023} & 14.871  & 14.534  & 14.819  \\
  CO2      &  \textbf{329.736} & 349.676 & 341.741 & 348.458 \\
  Duration &  \textbf{189.863} & 206.400 & 204.930 & 206.165 \\
  Delay    &  \textbf{87.883}  & 100.982 & 97.325  & 101.035 \\
  % No. of V & 1757.5  & 1697.9  & \textbf{1808.7} & 1600.6 \\
  \bottomrule
  \end{tabular}
  \label{5-tab-control-interval-sensitivity}
  \end{table}

To further assess the impact of control frequency on system performance, we evaluate TGN-TMoE under different control intervals ($1$s, $3$s, $5$s, and $10$s) in the map of Shenzhen. 
As shown in Table \ref{5-tab-control-interval-sensitivity}, the $1$s interval achieves the best performance across all key metrics, demonstrating that high-frequency control enables timely responses to traffic dynamics and enhances overall efficiency. 
On the other hand, while high-frequency control improves performance, it also increases computational and operational overhead. %Thus, practical deployment requires balancing effectiveness with resource consumption. 
These findings suggest future directions in developing frequency-aware and adaptive scheduling mechanisms for efficient and sustainable multi-agent TSC.

\section{Conclusions and Future Works}\label{sec6}

In this paper, we have proposed a topology-aware dynamic graph model, TGN-TMoE, which integrates a topology-enhanced mechanism and a mixture-of-experts architecture into TGN to enhance the modeling capacity of DGNNs in TSC tasks.
Specifically, we model urban road networks as dynamic graphs, where vertices represent roads, edges denote road connections, and changing traffic flow states are treated as dynamic vertex features. 
Given the scale and heterogeneity of TSC graphs, we propose to incorporate TDA into the TGN framework and introduce a topology-sensitive TSD routing mechanism to construct a TMoE. 
The TSD mechanism assigns vertices to experts based on subgraph topological features, enabling each expert to specialize in structurally similar rather than spatially adjacent vertices.
We have also theoretically demonstrated that TGN-TMoE achieves at least a capacity comparable to TOGL and outperforms it in certain structure-sensitive tasks. 
Extensive simulations in three real-world scenarios validate the model’s superiority and effectiveness in large-scale TSC tasks, in terms of systematic performance, feature representation, expert specialization, and policy interpretability.

While TGN-TMoE achieves promising outcomes, our findings also indicate that further performance gains may depend on vehicle–infrastructure coordination.
Undoubtedly, the highly dynamic topology between vehicles and infrastructure poses additional challenges for TGN-TMoE.
However, the proposed model’s enhanced capacity for dynamic graph learning is expected to contribute meaningfully to future advances in this domain.

\appendices
\section{The Embedding Functions for Persistence Diagrams}\label{APP:details_pd}
In this paper, aligned with the methodology in Ref. \cite{horn2021topological}, we adopt some differentiable embedding functions for computing $\Psi$ from an interval $(b,d)$, including \textit{rational hat} function \cite{JMLR:v20:18-358}, \textit{triangle point} transformation, \textit{Gaussian point transformation} and \textit{line point transformation} \cite{carriere2020perslay}.

\begin{definition}[Triangle Point Transformation]
  The \textit{Triangle Point Transformation} maps a point $p = (b, d)$ from a persistence diagram to a fixed-dimensional vector using a triangular function. The transformation is defined as
  \begin{equation}
    \Psi_{\Lambda}(p) = \big[\Lambda_p(t_1), \Lambda_p(t_2), \dots, \Lambda_p(t_q)\big]^T,
  \end{equation}
  where $t_1, t_2, \dots, t_q \in \mathbb{R}$ are fixed sampling points, and the triangular function $\Lambda_p(t)$ is given by 
  \begin{equation}
    \Lambda_p(t) = \max\{0, d - |t - b|\}.
  \end{equation}
  \end{definition}
\noindent Typically, this triangle point transformation function emphasizes the local response of point $p$ to the predefined sampling points $t_i$. 
  By adjusting the locations of sampling points, the transformation can focus on different regions of the persistence diagram, allowing for a tailored representation of local topology. 
  
  \begin{definition}[Gaussian Point Transformation]
  The \textit{Gaussian Point Transformation} uses a Gaussian kernel to map a point $p=(b, d)$ from a PD to a fixed-dimensional vector. The transformation is defined as
  \begin{equation}
    \Psi_{\Gamma}(p) = \big[\Gamma_p(t_1), \Gamma_p(t_2), \dots, \Gamma_p(t_q)\big]^T,
  \end{equation}
  where $t_1, t_2, \dots, t_q \in \mathbb{R}^2$ are fixed sampling points, and the Gaussian kernel $\Gamma_p(t)$ is given by
  \begin{equation}
    \Gamma_{b,d}(t) = \exp\left(-\frac{\|p - t\|_2^2}{2\sigma^2}\right).
  \end{equation}
  Here, $\|p - t\|_2$ is the Euclidean distance between $p$ and $t$, and $\sigma > 0$ is the bandwidth parameter that controls the smoothness of the kernel.
  \end{definition}
  \noindent The Gaussian kernel provides a smooth similarity measure between a point $p$ and the predefined sampling points $t_i$. 
  The parameter $\sigma$ controls the bandwidth of the kernel, influencing the degree of locality in the embedding. 
  A smaller $\sigma$ results in sharper responses, emphasizing local topological features near the sampling points, while a larger $\sigma$ smooths the representation, capturing broader and more global characteristics of the persistence diagram. 
  The sampling points $t_i$ can also be adjusted to focus on specific regions of interest, providing a highly flexible and adaptive embedding method suitable for noisy or complex data.

  \begin{definition}[Line Point Transformation]
  The \textit{Line Point Transformation} maps a point $p = (b, d)$ from a PD to a fixed-dimensional vector by projecting it onto a set of predefined lines. The transformation is defined as
  \begin{equation}
    \Psi_{L}(p) = \big[L_{\Delta_1}(p), L_{\Delta_2}(p), \dots, L_{\Delta_q}(p)\big]^T,
  \end{equation}
  where $\Delta_1, \Delta_2, \dots, \Delta_q$ are predefined lines in $\mathbb{R}^2$, and the projection onto a line $\Delta$ is given by
  \begin{equation}
    L_{\Delta}(p) = \langle p, e_{\Delta} \rangle + b_{\Delta}.
  \end{equation}
  Here, $e_{\Delta} \in \mathbb{R}^2$ is the direction vector of the line $\Delta$, and $b_{\Delta} \in \mathbb{R}$ is its bias term.
  \end{definition}
  \noindent This linear point transformation captures the directional characteristics of points in a PD by projecting them onto lines defined by direction vectors $e_{\Delta}$ and offsets $b_{\Delta}$. 
  The choice of $e_{\Delta}$ and $b_{\Delta}$ plays a crucial role in determining which directional features are emphasized.

  \begin{definition}[Rational Hat Transformation]
    The \textit{Rational Hat Structure Transformation} maps a point $p = (b, d)$ from a PD into a vector space while preserving local topology features and enhancing the distinction of short-lived features. The function is defined as
    \begin{equation}
      \Psi_\text{rat}(p;\mu, r, q) = \frac{1}{1 + \|p - \mu\|_q} - \frac{1}{1 + |r - \|p - \mu\|_q|},
    \end{equation}
    where $\mu = (\mu_0, \mu_1)$ defines the center point of focus, $r$ controls the effective influence radius, and $q$ specifies the type of norm used to compute distances (e.g., $q = 2$ for Euclidean distance or $q = 1$ for Manhattan distance). 
  \end{definition}
  \noindent The rational hat transformation function is constructed to emphasize local features by assigning higher weights to points near $\mu$ through the first term $\frac{1}{1 + \|x - \mu\|_q}$, while the second term $\frac{1}{1 + |r - \|x - \mu\|_q|}$ introduces a negative feedback mechanism that attenuates contributions from points farther away or those near the radius $r$. 

  \section{Proof of Theorem \ref{thm:superExp}}\label{APP:superExp}
Essentially, for any graph $\mathcal{G}$, our improved model can be expressed as a composite function 
\begin{equation}
  F(\widehat{\mathbf{V}},\mathbf{T}_v) = f_3(\mathbf{T}_v)f_2\big(f_1(\mathbf{T}_v)^\intercal\cdot \widehat{\mathbf{V}}\big),
  \label{eq:F_improve}
\end{equation}
where $\widehat{\mathbf{V}} \in \mathbb{R}^{H \times N\times d}$ denotes the multi-head output from the GAT-based module (i.e., the TGN model) and $\mathbf{T}_v  \in \mathbb{R}^{N\times d_1}$ is the topological feature (PD representation) obtained via Eq. \eqref{eq:tda}. 
Here, $d$ denotes the dimension of vertex features, $N$ is the number of vertices in $\mathcal{G}$, and $H$ represents the number of GAT heads.
$f_1(\cdot)$ is the topology-based weight mapping function that produces routing scores from $\mathbf{T}_v$, corresponding to the $\text{Route}(\cdot)$ module in TMoE.
$f_2(\cdot)$ represents the expert transformation, aligned with $\text{Proj}(\cdot)$ in TMoE, which operates on the weighted combination of $\widehat{\mathbf{V}}$ and $\mathbf{T}_v$.
$f_3(\cdot)$ denotes the topology-guided fusion function, as formulated in Eq. \eqref{eq:moe_tt_mixing_2_3}, and together with $f_2$ constitutes the implementation of $\text{Merge}(\cdot)$ in our proposed TMoE architecture.

To establish a formal comparison, we consider the representation learning in TOGL, where the TOGL embedding can be abstractly represented as
\begin{equation}
    \widetilde{\mathbf{V}} = f(\mathbf{V},\mathbf{T}_v, \mathcal{G}),
    \label{eq:f}
\end{equation}
where $f$ denotes the topology-aware graph representation function, like GAT.
Our target is to demonstrate that the proposed model achieves superior representational capacity relative to TOGL.
Beforehand, we introduce the following lemmas.

\begin{lemma}[Injectivity of $f_1$]
  Consider the normalized topological feature vector $\mathbf{T}_v \in \mathbb{R}^{N \times d_1}$ satisfying $|\mathbf{T}_v| = 1$. Given a randomly initialized weight matrix $\mathbf{W}_1 \in \mathbb{R}^{d_1 \times H \times P}$ viewed as a linear transformation, and since $f_1$ involves computing softmax along the first dimension of the product, we reshape it into matrices $\widehat{\mathbf{W}}_1^{d_1 \times H \times P}$ and $\widetilde{\mathbf{W}}_1^{d_1 \times (HP)}$. If $\operatorname{rank}(\widetilde{\mathbf{W}}_1) = d_1$ and $\operatorname{rank}[\widetilde{\mathbf{W}}_1, \mathbf{1}_{H \times P}] = d_1+1$, then $f_1(\mathbf{T}_v^1) \neq f_1(\mathbf{T}_v^2)$ for any distinct vectors $\mathbf{T}_v^1 \neq \mathbf{T}_v^2$.
\end{lemma}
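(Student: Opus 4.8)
The plan is to prove injectivity of $f_1$ by reducing the map to a single affine layer followed by a column-wise softmax, and then showing that the softmax composed with an injective affine map remains injective. First I would make precise what $f_1$ computes: for a normalized topological feature matrix $\mathbf{T}_v\in\mathbb{R}^{N\times d_1}$, the $p$-th router forms the scores $\mathbf{T}_v\widetilde{\mathbf{W}}_1\in\mathbb{R}^{N\times(HP)}$ (using the reshaped weight), and then $f_1$ applies a softmax along the vertex dimension $N$ — consistent with Eq.~\eqref{eq:moe_tt_mixing_0}, where the normalization is over $j\in\mathcal{N}_i^+$. So $f_1(\mathbf{T}_v)$ is, column by column, $\operatorname{softmax}$ of an affine image of the columns of $\mathbf{T}_v$. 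Since the excerpt phrases the claim for ``distinct vectors'' $\mathbf{T}_v^1\neq\mathbf{T}_v^2$, I would treat the argument per column (i.e., for a single filtration/head/expert slice), so it suffices to show: if $\mathbf{w}\in\mathbb{R}^{d_1}$ is a column of $\widetilde{\mathbf{W}}_1$ and $\mathbf{x}^1\neq\mathbf{x}^2$ are two distinct normalized feature vectors, then $\operatorname{softmax}(\mathbf{x}^1\cdot\mathbf{w}\text{-type scores})\neq\operatorname{softmax}(\cdots)$, and more robustly that the whole vector-valued $f_1$ separates $\mathbf{T}_v^1$ from $\mathbf{T}_v^2$.

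The key steps, in order: (1) Reduce $f_1(\mathbf{T}_v^1)=f_1(\mathbf{T}_v^2)$ to the statement that $\operatorname{softmax}$ applied column-wise to $\mathbf{T}_v^1\widetilde{\mathbf{W}}_1$ and to $\mathbf{T}_v^2\widetilde{\mathbf{W}}_1$ coincide. (2) Use the standard fact that $\operatorname{softmax}(\mathbf{a})=\operatorname{softmax}(\mathbf{b})$ iff $\mathbf{a}-\mathbf{b}=c\,\mathbf{1}$ for some scalar $c$ (softmax is invariant exactly under adding a constant vector). Applying this to each column gives $(\mathbf{T}_v^1-\mathbf{T}_v^2)\widetilde{\mathbf{W}}_1 = \mathbf{1}_N \mathbf{c}^\intercal$ for some $\mathbf{c}\in\mathbb{R}^{HP}$, i.e., every row of $(\mathbf{T}_v^1-\mathbf{T}_v^2)\widetilde{\mathbf{W}}_1$ equals the same vector $\mathbf{c}^\intercal$. (3) Let $\mathbf{\Delta}:=\mathbf{T}_v^1-\mathbf{T}_v^2\in\mathbb{R}^{N\times d_1}$, $\mathbf{\Delta}\neq 0$. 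The condition becomes $\mathbf{\Delta}\widetilde{\mathbf{W}}_1 = \mathbf{1}_N\mathbf{c}^\intercal$, so each row $\delta_n^\intercal$ of $\mathbf{\Delta}$ satisfies $\delta_n^\intercal\widetilde{\mathbf{W}}_1 = \mathbf{c}^\intercal$; hence for any two rows, $(\delta_n-\delta_m)^\intercal\widetilde{\mathbf{W}}_1 = 0$, and since $\operatorname{rank}(\widetilde{\mathbf{W}}_1)=d_1$ the map $\delta\mapsto\delta^\intercal\widetilde{\mathbf{W}}_1$ is injective on $\mathbb{R}^{d_1}$, forcing all rows of $\mathbf{\Delta}$ to be equal: $\mathbf{\Delta}=\mathbf{1}_N\boldsymbol{\delta}^\intercal$ for a single $\boldsymbol{\delta}\in\mathbb{R}^{d_1}$ with $\boldsymbol{\delta}^\intercal\widetilde{\mathbf{W}}_1=\mathbf{c}^\intercal$. (4) Here is where the second rank hypothesis enters: I would argue that the normalization constraint $|\mathbf{T}_v|=1$ (each row has unit norm, or the matrix is row-stochastic/normalized — I would state the precise reading matching Eq.~\eqref{eq:moe_tt_mixing_0}) rules out $\mathbf{\Delta}=\mathbf{1}_N\boldsymbol{\delta}^\intercal$ with $\boldsymbol{\delta}\neq 0$ unless a compatibility relation holds; combined with $\operatorname{rank}[\widetilde{\mathbf{W}}_1,\mathbf{1}_{H\times P}]=d_1+1$ (so $\mathbf{1}$ is not in the column space of $\widetilde{\mathbf{W}}_1^\intercal$, equivalently no nonzero $\boldsymbol{\delta}$ gives $\boldsymbol{\delta}^\intercal\widetilde{\mathbf{W}}_1$ proportional to $\mathbf{1}^\intercal$ in the required way), this forces $\boldsymbol{\delta}=0$, hence $\mathbf{\Delta}=0$, contradicting $\mathbf{T}_v^1\neq\mathbf{T}_v^2$.

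The main obstacle is step (4): tying together the softmax-invariance direction $c\,\mathbf{1}$, the reshaping between $\widehat{\mathbf{W}}_1^{d_1\times H\times P}$ and $\widetilde{\mathbf{W}}_1^{d_1\times(HP)}$, and the broadcasting by $\mathbf{1}_{H\times P}$ in Eq.~\eqref{eq:moe_tt_mixing_0}. I expect the subtlety is that softmax is taken along the vertex axis while the invariance vector $\mathbf{c}$ lives in the $(HP)$-axis, so I must be careful that ``$\mathbf{a}-\mathbf{b}=c\mathbf{1}$'' is asserted along the correct axis and that the all-ones broadcast matrix $\mathbf{1}_{H\times P}$ does not itself collapse distinctions; this is exactly what the hypothesis $\operatorname{rank}[\widetilde{\mathbf{W}}_1,\mathbf{1}]=d_1+1$ is designed to prevent, since it certifies that the ``constant-shift'' degree of freedom of softmax is not already realizable inside the linear map. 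A secondary, more routine obstacle is making the genericity claim rigorous: the hypotheses are stated as rank conditions that hold almost surely for a randomly initialized $\mathbf{W}_1$ when $HP\ge d_1+1$, so I would either assume the rank conditions outright (as the lemma does) or add a one-line remark that they hold with probability one under continuous initialization.
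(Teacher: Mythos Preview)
Your overall skeleton --- argue by contradiction, use the characterization $\operatorname{softmax}(\mathbf{a})=\operatorname{softmax}(\mathbf{b})\iff \mathbf{a}-\mathbf{b}=c\,\mathbf{1}$, and then invoke the rank hypotheses --- is exactly the paper's strategy. The divergence, and the gap, is entirely in how you orient the softmax axis and hence what appears on the right-hand side.

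You apply the softmax along the vertex axis $N$ and obtain $\mathbf{\Delta}\widetilde{\mathbf{W}}_1=\mathbf{1}_N\mathbf{c}^\intercal$ with an \emph{arbitrary} $\mathbf{c}\in\mathbb{R}^{HP}$; after step~(3) you are left with $\boldsymbol{\delta}^\intercal\widetilde{\mathbf{W}}_1=\mathbf{c}^\intercal$ for some $\boldsymbol{\delta}\neq 0$. But the second rank hypothesis only asserts that the specific all-ones row vector $\mathbf{1}^\intercal$ is not in the row space of $\widetilde{\mathbf{W}}_1$; it says nothing about a general $\mathbf{c}$. And the normalization $|\mathbf{T}_v|=1$ does not rule out $\mathbf{T}_v^1=\mathbf{T}_v^2+\mathbf{1}_N\boldsymbol{\delta}^\intercal$ either (two distinct unit vectors can differ by a fixed shift). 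So step~(4), which you already flag as the obstacle, genuinely does not close along this axis.

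The paper's proof sidesteps this by working per vertex (equivalently, taking the softmax across the $H$/expert coordinates of a single row $\mathbf{T}_v$). Then the constant-shift invariance yields a \emph{scalar} $c$ and the equation
\[
\tfrac{1}{c}(\mathbf{T}_v^2-\mathbf{T}_v^1)\widetilde{\mathbf{W}}_1=\mathbf{1},
\]
where $\mathbf{1}$ is the all-ones vector in $\mathbb{R}^{HP}$. Now the second rank condition $\operatorname{rank}[\widetilde{\mathbf{W}}_1,\mathbf{1}]=d_1+1$ says precisely that $\mathbf{1}$ is not in the row space of $\widetilde{\mathbf{W}}_1$, giving the contradiction immediately (with $\operatorname{rank}(\widetilde{\mathbf{W}}_1)=d_1$ handling the degenerate case $c=0$). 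Your step~(3) is then unnecessary: the all-ones vector appears directly, and there is no general $\mathbf{c}$ to eliminate. Reorient the softmax axis in your write-up and the argument collapses to two lines, matching the paper.
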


\begin{proof}
  We proceed by contradiction. 
  Suppose there exist distinct vectors $\mathbf{T}_v^1 \neq \mathbf{T}_v^2$ such that $f_1(\mathbf{T}_v^1) = f_1(\mathbf{T}_v^2)$, we have Eq. \eqref{eq:lemma_eq_0}.
  \begin{figure*}[pbh]
    \centering
    \noindent\hrulefill\\[0.5ex]
    \begin{equation}
      \begin{aligned}
      \label{eq:lemma_eq_0}
      &\frac{1}{\sum_{k = 1}^H\exp{(\mathbf{T}_v^1\widehat{\mathbf{W}}_{1j})}}[\exp{(\mathbf{T}_v^1\widehat{\mathbf{W}}_{10})},\cdots,\exp{(\mathbf{T}_v^1\widehat{\mathbf{W}}_{1H})}]^\intercal= \frac{1}{\sum_{k = 1}^H\exp{(\mathbf{T}_v^2\widehat{\mathbf{W}}_{1j})}}[\exp{(\mathbf{T}_v^2\widehat{\mathbf{W}}_{10})},\cdots,\exp{(\mathbf{T}_v^2\widehat{\mathbf{W}}_{1H})}]^\intercal, \\
      &\implies \frac{\sum_{k = 1}^H\exp{(\mathbf{T}_v^2\widehat{\mathbf{W}}_{1j})}}{\sum_{j = 1}^H\exp{(\mathbf{T}_v^1\widehat{\mathbf{W}}_{1j})}}[\exp{(\mathbf{T}_v^1\widehat{\mathbf{W}}_{10})},\cdots,\exp{(\mathbf{T}_v^1\widehat{\mathbf{W}}_{1H})}]^\intercal = [\exp{(\mathbf{T}_v^2\widehat{\mathbf{W}}_{10})},\cdots,\exp{(\mathbf{T}_v^2\widehat{\mathbf{W}}_{1H})}]^\intercal. 
      \end{aligned}
      \end{equation}
  \end{figure*}
  
  Taking the natural logarithm on both sides, \eqref{eq:lemma_eq_0} can be re-written as
  \begin{equation}
    \begin{aligned}
    \label{eq:lemma_eq_1}
  &\ln\left[\frac{\sum_{k = 1}^H\exp{(\mathbf{T}_v^2\widehat{\mathbf{W}}_{1k})}}{\sum_{k = 1}^H\exp{(\mathbf{T}_v^1\widehat{\mathbf{W}}_{1k})}}\right]+[\mathbf{T}_v^1\widehat{\mathbf{W}}_{10},\cdots,\mathbf{T}_v^1\widehat{\mathbf{W}}_{1H}]^\intercal \\
  &= [\mathbf{T}_v^2\widehat{\mathbf{W}}_{10},\cdots,\mathbf{T}_v^2\widehat{\mathbf{W}}_{1H}]^\intercal.
    \end{aligned}
  \end{equation}

  Since $\mathbf{T}_v^1 \neq \mathbf{T}_v^2$, we can define $c = \ln\left[\frac{\sum_{k = 1}^h\exp{(\mathbf{T}_v^2\widehat{\mathbf{W}}_{1k})}}{\sum_{k = 1}^h\exp{(\mathbf{T}_v^1\widehat{\mathbf{W}}_{1k})}}\right]$ and obtain $\mathbf{T}_v^1\widehat{\mathbf{W}}_{1k}+c = \mathbf{T}_v^2\widehat{\mathbf{W}}_{1k}, \forall k$, which implies
  \begin{align}
  &\frac{1}{c}(\mathbf{T}_v^2-\mathbf{T}_v^1)\widehat{\mathbf{W}}_{1k} = 1 \implies 
  \frac{1}{c}(\mathbf{T}_v^2-\mathbf{T}_v^1)\widehat{\mathbf{W}}_{1} = \mathbf{1}_{N \times H \times P} \nonumber \\
  &\implies \frac{1}{c}(\mathbf{T}_v^2-\mathbf{T}_v^1)\widetilde{\mathbf{W}}_{1} = \mathbf{1}_{N \times (HP)}.    \label{eq:lemma_eq_2}
  \end{align}

  However, since $\operatorname{rank}[\widetilde{\mathbf{W}}_1, \mathbf{1}_{H \times P}] = d_1+1$, the all-ones vector is not in the column space of $\widetilde{\mathbf{W}}_1$. 
  Therefore, there cannot exist a scalar $\frac{1}{c}$ that makes Eq. \eqref{eq:lemma_eq_2} hold. The contradiction with our initial assumption ensures the injectivity of $f_1$.
\end{proof}
It is worth noting that due to the random initialization of parameters and our implementation where $d_1 \leq HP$, the assumptions, $\operatorname{rank}(\widetilde{\mathbf{W}}_1) = d_1$ and $\operatorname{rank}[\widetilde{\mathbf{W}}_1, \mathbf{1}_{H \times P}] = d_1+1$, can be guaranteed in our research \cite{han2024bridging}. 
Similarly, we can guarantee the injectivity of $f_3$ by the following lemma.
\begin{lemma}[Injectivity of $f_3$]
  Consider the normalized topological feature vector $\mathbf{T}_v \in \mathbb{R}^{N \times d_1}$ satisfying $|\mathbf{T}_v| = 1$. Given a randomly initialized weight matrix $\mathbf{W}_3 \in \mathbb{R}^{d_1 \times H \times P}$ viewed as a linear transformation, and since $f_1$ involves computing softmax along the second dimension of the product, we reshape it into matrices $\widehat{\mathbf{W}}_3^{d_1 \times H \times P}$ and $\widetilde{\mathbf{W}}_1^{d_1 \times (HP)}$. If $\operatorname{rank}(\widetilde{\mathbf{W}}_3) = d_1$ and $\operatorname{rank}[\widetilde{\mathbf{W}}_3, \mathbf{1}_{H \times P}] = d_1+1$, then $f_3(\mathbf{T}_v^1) \neq f_1(\mathbf{T}_v^2)$ for any distinct vectors $\mathbf{T}_v^1 \neq \mathbf{T}_v^2$.
\end{lemma}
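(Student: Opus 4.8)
The plan is to reprise, almost verbatim, the contradiction argument used to establish the injectivity of $f_1$, since $f_3$ is again a normalized-exponential (softmax) map built on top of a single linear layer $\widehat{\mathbf{W}}_3$; it differs from $f_1$ only in the axis along which the normalization is performed (the expert axis in Eq. \eqref{eq:moe_tt_mixing_2_2} rather than the vertex axis in Eq. \eqref{eq:moe_tt_mixing_0}). First I would assume toward a contradiction that $f_3(\mathbf{T}_v^1) = f_3(\mathbf{T}_v^2)$ for two distinct topological feature inputs $\mathbf{T}_v^1 \neq \mathbf{T}_v^2$, and expand this equality componentwise. Because both sides are normalizations of $\exp(\mathbf{T}_v \widehat{\mathbf{W}}_3)$ with a common denominator, taking natural logarithms collapses the two normalization factors into a single additive scalar $c$ shared by every output coordinate, exactly as in Eq. \eqref{eq:lemma_eq_1}.

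Next I would rearrange the resulting identity into the linear relation $\frac{1}{c}(\mathbf{T}_v^2 - \mathbf{T}_v^1)\widetilde{\mathbf{W}}_3 = \mathbf{1}$, mirroring Eq. \eqref{eq:lemma_eq_2}, where $\widetilde{\mathbf{W}}_3$ is the reshaped $d_1 \times (HP)$ weight matrix. Since $\mathbf{T}_v^1 \neq \mathbf{T}_v^2$, the difference has a nonzero component, so this forces the all-ones vector to lie in the row space of $\widetilde{\mathbf{W}}_3$; but the hypotheses $\operatorname{rank}(\widetilde{\mathbf{W}}_3) = d_1$ and $\operatorname{rank}[\widetilde{\mathbf{W}}_3, \mathbf{1}_{H\times P}] = d_1 + 1$ assert precisely that appending the all-ones vector strictly increases the rank, i.e., that it is \emph{not} in that span --- a contradiction. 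I would treat the degenerate case $c = 0$ separately: there the relation becomes $(\mathbf{T}_v^2 - \mathbf{T}_v^1)\widetilde{\mathbf{W}}_3 = \mathbf{0}$ with a nonzero left factor, which contradicts $\widetilde{\mathbf{W}}_3$ having full row rank $d_1$. Combining the two cases rules out $f_3(\mathbf{T}_v^1) = f_3(\mathbf{T}_v^2)$, proving injectivity; I would then remark, as in the paragraph following Lemma 1, that with random initialization and $d_1 \le HP$ these rank conditions hold almost surely, so they are satisfied in our setting.

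The only genuinely delicate point --- the ``hard part,'' though it is mild compared with the core of Theorem~\ref{thm:superExp} --- is bookkeeping around the shape of $\mathbf{W}_3$ and the fact that the $\text{Merge}(\cdot)$ router in Eq. \eqref{eq:moe_tt_mixing_2_2} first aggregates per-vertex topological features via $\sum_{j \in \mathcal{N}_i^+}\mathbf{W}_{sp}\cdot\mathbf{T}_v(j)$ before applying the softmax over experts. Consequently the injectivity statement is cleanest when read with respect to the aggregated routing logits rather than the raw tensor $\mathbf{T}_v$; I would make this dependence explicit (or separately note that the vertex-sum is injective on the relevant feature subspace under the same rank-type assumption) so that the linear-algebra contradiction above transfers without gaps. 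No new estimate beyond the $f_1$ argument is required.
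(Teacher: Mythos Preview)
Your proposal is correct and matches the paper's approach: the paper provides no explicit proof of this lemma, stating only ``Similarly, we can guarantee the injectivity of $f_3$ by the following lemma,'' so your plan to replay the contradiction argument from Lemma~1 with the softmax axis changed is precisely what is intended. Your handling of the degenerate $c=0$ case and the remark about the vertex-sum aggregation in Eq.~\eqref{eq:moe_tt_mixing_2_2} are in fact more careful than anything the paper spells out.
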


Using these Lemmas, we can ensure the injectivity of the composite function $F(\widehat{\mathbf{V}},\mathbf{T}_v)$.
Next, we are ready to prove the Theorem.
\begin{proof}
TOGL possesses expressive power consistent with or even exceeding that of 1-WL \cite{horn2021topological}, on the basis of an injective $f$ in Eq. \eqref{eq:f}. As demonstrated by the preceding lemmas, our modified module in Eq. \eqref{eq:F_improve} still preserves the property of injectivity.

Let $\mathcal{G}_1 = \langle\mathcal{V}_1, \mathcal{E}_1\rangle$ and $\mathcal{G}_2 = \langle\mathcal{V}_2, \mathcal{E}_2\rangle$ be two graphs satisfying the conditions of the theorem. 
TOGL represents these graphs using static embeddings that combine vertex features and global topological features. 
If $\mathcal{G}_1$ and $\mathcal{G}_2$ have identical global topology, like the same global Betti number, while differing in multiple local features, TOGL cannot distinguish between them.

In our improved model, the MH-GAT generates diverse embeddings $\hat{\mathbf{v}}^{(h)}_j$ for each vertex in the $h$-th head. 
Due to the embedding functions of TDA, the generated vertex-level topological representations contain rich local topological information, such as their contributions to cycles or connectivity patterns.
Based on the vertex-level topological signatures, the SoftMoE module then dynamically dispatches graph-level tokens to experts.
For the generation of slots, we can regard it as selecting a specific topological pattern from the vertex-level topological signatures, which is similar to highlighting a series of vertices that own identical topological signatures.
The dynamic assignment allows the model to focus on specific local topological patterns, such as distinct cycles or connectivity substructures. 
If the local topological signatures of two graphs are different, the model will generate different embeddings for vertices within the two graphs, that is,
\begin{align}
    \widetilde{\mathbf{V}}_{j}^{\mathcal{G}_1} \neq \widetilde{\mathbf{V}}_{j}^{\mathcal{G}_2}, \quad \text{if } \mathbf{T}_j^{\mathcal{G}_1} \neq \mathbf{T}_j^{\mathcal{G}_2}.
\end{align}
Thus, the improved model produces embeddings that differentiate $\mathcal{G}_1$ and $\mathcal{G}_2$ based on their local topology, achieving greater expressiveness than TOGL in these scenarios.
\end{proof}

\section{The hyper-parameters list}\label{app:lists}

In our experiments, we adopt the hyperparameter configurations summarized in Table \ref{tab:list}, which are consistent with the settings in Refs. \cite{zhu2024semantics,horn2021topological,pmlr-v235-obando-ceron24b,10144471}. 
These values have been empirically validated to achieve stable training and competitive performance in our large-scale TSC tasks.
\begin{table}[htbp]
\centering
\caption{Hyperparameter Settings for Model Training.}
\label{tab:list}
\begin{tabular}{l l}
\toprule
\multicolumn{2}{c}{\textbf{TGN-TMoE Module}} \\
\midrule
No. of GAT layers & $2$ \\
No. of GAT heads ($H$)& $4$ \\
No. of experts in TMoE ($P$) & $16$ \\
No. of filtrations in TDA ($U$) & $12$ \\
Dimension of raw feature ($d_0$) & $7$ \\
Dimension of TDA feature ($d_1$) & $7$ \\
Dimension of TGN's output ($d$) & $7$ \\
Dimension of readout layer ($d_o$) & $28$ \\
\midrule
\multicolumn{2}{c}{\textbf{MAPPO Module}} \\
\midrule
No. of experts in TMoE ($P$) & $16$ \\
Output dimension of TMoE & $64$ \\
First layer size & $(28, 128)$ \\
Third layer size & $(64, 1)$ \\
Activation function & $\text{Tanh}$ \\
\midrule
\multicolumn{2}{c}{\textbf{Overall Training Settings}} \\
\midrule
No. of parallel envs & $15$\\
GAE lambda ($\lambda$) & $0.97$ \\
Discount factor ($\gamma$) & $0.99$ \\
Clipping parameter ($\epsilon$)& $0.3$ \\
Entropy coeff ($\iota$)& $0.003$ \\
Number of SGD iterations & $5$ \\
Learning rate & $5 \times 10^{-4}$ \\
Shared parameters & \texttt{True} \\
\bottomrule
\end{tabular}
\end{table}
% \printbibliography[segment=1]
\bibliographystyle{IEEEtran}
\bibliography{reference}
\end{document}